\newtheorem{theorem}{Theorem}
\newtheorem{assumption}{Assumption}
\newtheorem{lemma}{Lemma}
\newtheorem{definition}{Definition}
\newcommand{\cE}{\mathcal{E}}
\newcommand{\cH}{\mathcal{H}}
\newcommand{\cS}{\mathcal{S}}
\newcommand{\cN}{\mathcal{N}}
\newcommand{\cU}{\mathcal{U}}
\newcommand{\abs}[1]{\left| #1 \right|}
\newcommand{\bOne}[1]{\mathds{1} \! \left\{#1\right\}}
\newcommand{\bracket}[1]{\left(#1\right)}
\newcommand{\norm}[1]{\left\| #1 \right\|}
\newcommand{\set}[1]{\left\{ #1 \right\}}
\newcommand{\EE}[1]{\mathbb{E} \left[#1\right]}
\newcommand{\PP}[1]{\mathbb{P} \left(#1\right)}
\newcommand{\etal}{\emph{et al.}}
\mathchardef\mhyphen="2D
\newcommand{\greedy}{{\tt Greedy}}
\newif\ifsup\supfalse
\DeclareMathOperator*{\argmax}{argmax}
\DeclareMathOperator*{\argmin}{argmin}
\title{The Hardness Analysis of Thompson Sampling for Combinatorial Semi-bandits with Greedy Oracle}
\author{
    Fang Kong$^1$ \qquad Yueran Yang$^1$ \qquad Wei Chen$^2$ \qquad Shuai Li$^1$\thanks{Corresponding author}\\ 
    $^1$Shanghai Jiao Tong University \qquad $^2$Microsoft Research  \\
    {\small \texttt{\{fangkong,yangyr99,shuaili8\}@sjtu.edu.cn}\hskip1.8em
    \texttt{weic@microsoft.com}
    }
}
\begin{document}

\maketitle

\begin{abstract}
  Thompson sampling (TS) has attracted a lot of interest in the bandit area. It was introduced in the 1930s but has not been theoretically proven until recent years. 
  All of its analysis in the combinatorial multi-armed bandit (CMAB) setting requires an exact oracle to provide optimal solutions with any input. However, such an oracle is usually not feasible since many combinatorial optimization problems are NP-hard and only approximation oracles are available. An example \citep{WangC18} has shown the failure of TS to learn with an approximation oracle. However, this oracle is uncommon and is designed only for a specific problem instance. 
  It is still an open question whether the convergence analysis of TS can be extended beyond the exact oracle in CMAB. 
  In this paper, we study this question under the greedy oracle, which is a common (approximation) oracle with theoretical guarantees to solve many (offline) combinatorial optimization problems. 
  We provide a problem-dependent regret lower bound of order $\Omega(\log T/\Delta^2)$ to quantify the hardness of TS to solve CMAB problems with greedy oracle, where $T$ is the time horizon and $\Delta$ is some reward gap. We also provide an almost matching regret upper bound. These are the first theoretical results for TS to solve CMAB with a common approximation oracle and break the misconception that TS cannot work with approximation oracles.

\end{abstract}


\section{Introduction}\label{sec:intro}
Stochastic multi-armed bandit (MAB) problem \citep{lattimorebandit,auer2002finite,agrawal2013further} is a classical online learning framework. It has been extensively studied in the literature and has a wide range of applications \citep{glowacka2019bandit,glowacka2019bandit2,lattimorebandit}. 
The problem is modeled by a $T$-round game between the learning agent and the environment. The environment contains an arm set and each arm is associated with a reward distribution.
At each round $t$, the agent first selects an arm, while the environment generates a random reward for each arm from its reward distribution. The agent then obtains the reward of the selected arm. 
The objective of the agent is to accumulate as many expected rewards over $T$ rounds, or equivalent to minimizing the cumulative expected regret, which is defined as the cumulative difference between the expected reward of the optimal arm and the selected arms over $T$ rounds.
To achieve this long-horizon goal, the learning agent has to face the dilemma of \textit{exploration} and \textit{exploitation} in each round. The former aims to try arms that have not been observed enough times to get a potentially higher reward, the latter focuses on the arm with the best observed performance so far to maintain a high profit. How to balance the tradeoff between exploration and exploitation is the main focus of the MAB algorithms.

One of the most popular bandit algorithms is the upper confidence bound (UCB) algorithm \citep{auer2002finite}.
The algorithm aims to construct confidence sets for unknown expected rewards and selects arms according to their highest upper confidence bounds. The UCB-type algorithms have been widely studied and provided theoretical guarantees with regret upper bound of order $O(\log T/\Delta)$, where $\Delta$ is the minimum gap between the expected reward of the optimal arm and any suboptimal arms. 
Thompson sampling (TS) \citep{agrawal2013further,russo2018tutorial} is another popular method to solve MAB problems. It is a randomized algorithm based on the Bayesian idea, 
	which maintains an iteratively updated posterior distribution for each arm and chooses arms according to their probabilities of being the best one. 
The TS algorithm was introduced in the 1930s \citep{thompson1933likelihood}, but its theoretical analysis is open until recent years \citep{kaufmann2012thompson,agrawal2012analysis}.
It was proven that the regret upper bound of the TS algorithm is of the same order of $O(\log T/\Delta)$ in MAB problems \citep{agrawal2013further}.
Benefited from the advantages of easier implementation and better empirical performance compared to UCB, the TS-type algorithms attract more attentions in recent years.

Despite its importance, the MAB framework may fail to model many real applications since the agent's action is usually not a single arm but a combination of several arms. This motivates the study on the combinatorial MAB (CMAB) problem \citep{gai2012combinatorial,combes2015combinatorial,komiyama2015optimal,wen2015efficient,chen2016combinatorial,WeiChen2017,WangC18,perrault2020statistical}. In the CMAB framework, the agent selects a combination of base arms as an action to play in each round. All outcomes of these selected arms are then revealed to the agent, which is called \textit{semi-bandit feedback} and is widely studied in the literature \citep{chen2016combinatorial,WeiChen2017,wen2017online,WangC18,huyuk2019analysis,perrault2020statistical}. Such CMAB framework can cover many real application scenarios including probabilistic maximum coverage (PMC) \citep{chen2013combinatorial}, online influence maximization (OIM) \citep{WeiChen2017}, multiple-play MAB (MP-MAB) \citep{komiyama2015optimal} and minimum spanning tree (MST). 

TS-type algorithms have recently attracted a lot of interest in CMAB problems \citep{komiyama2015optimal,WangC18,huyuk2019analysis,perrault2020statistical}. All of these works need an exact oracle to provide the optimal solution with sampled parameters as input in each round.
However, such oracles are usually not feasible since many offline combinatorial optimization problems, such as the offline problem of PMC and OIM \citep{kempe2003maximizing}, are NP-hard and only approximation oracles are available. With an example \citep{WangC18} illustrating the non-convergent regret of TS with an artificial approximation oracle designed for a specific problem instance, whether TS can work well in CMAB problems with common approximation oracles is still an open problem. 

One of the most common oracles for offline combinatorial optimization problems with a theoretical guarantee is the greedy algorithm.
It sequentially finds the current optimal arm, or the optimal collection of multiple arms according to the structural correlations, to maximize the current total expected reward. When the termination condition is met, it will return the set of all arms found in previous steps as the solution. The termination condition is usually formulated by a constraint on the number of steps. For example, in the PMC, OIM, and MP-MAB problems, such a process is limited to continue $K$ steps. 
The greedy algorithm can provide approximate solutions for offline problems of PMC \citep{chvatal1979greedy} and OIM \citep{kempe2003maximizing}, and 
exact optimal solutions for offline problems of MP-MAB \citep{komiyama2015optimal} and MST \citep{kruskal1956shortest}. In general, as long as the expected reward in a problem satisfies the monotonicity and submodularity on the action set, the greedy algorithm serves as an offline oracle to provide an approximate solution \citep{nemhauser1978analysis}.

In this paper, we first formulate the CMAB problems with greedy oracle, which is general enough and covers PMC, MP-MAB, and many other problems. 
In this framework, the objective of the learning agent is to minimize the cumulative greedy regret, defined as the cumulative difference between the expected reward of the selected action and that of the greedy's solution in the real environment. 
Focusing on a specific PMC problem instance, we derive the hardness analysis of the TS algorithm with the greedy oracle to solve CMAB problems. 
Due to the mismatch between the estimation gaps that need to be eliminated by exploration and the actual regret the algorithm needs to pay for each such exploration, the TS algorithm with greedy oracle in this CMAB problem cannot achieve as good theoretical performance as previous MAB algorithms. 
A problem-dependent regret lower bound of order $\Omega(\log T/\Delta^2)$ is provided to illustrate such hardness, where $T$ is the time horizon and $\Delta$ is some reward gap. 
By carefully exploiting the property of the greedy oracle, we also provide an almost matching problem-dependent regret upper bound, which is tight on that PMC problem instance and also recovers the main order of TS when solving MAB problems \citep{agrawal2013further}.  
These results are the first theoretical results of TS with approximation oracle to solve CMAB problems, which show that the linear regret example in \citet{WangC18} does not hold 
	for every approximation oracle. 
  


\section{Related Work}\label{sec:related}
CMAB problems have been widely studied in the literature \citep{gai2012combinatorial,chen2013combinatorial,chen2016combinatorial,WeiChen2017,WangC18,perrault2020statistical,wen2017online,li2020online,huyuk2019analysis}. Here we mainly focus on the most relevant works. 
 \citet{gai2012combinatorial} first study the CMAB problem with linear reward, where the reward of an action is linear in the reward of base arms included in it. They introduce a UCB-type algorithm to solve such problems and allow approximation algorithms to serve as offline oracles.  Later, \citet{chen2013combinatorial,chen2016combinatorial,WeiChen2017} generalize this framework by considering a larger class of rewards and the case with probabilistically triggered arms (CMAB-T). 
 This framework only assumes the expected reward satisfies the monotonicity and Lipschitz condition on the mean vector of base arms. 
 The combinatorial UCB (CUCB) algorithm is proposed to solve such general CMAB problems, which works with any offline oracle with approximation guarantees. 
 When only approximation oracles are available, the goal of the algorithm is relaxed to minimize the cumulative approximation regret, which is defined as the difference between the expected reward of the selected action and that of the scaled optimal solution. 
 The CUCB algorithm achieves the regret upper bound of order $O(\log T/\Delta_{\min})$ \citep{WeiChen2017}, where $\Delta_{\min}$ is the minimum reward gap from the scaled optimal solution over all suboptimal actions.

Compared with UCB-type algorithms which need to compute upper confidence bounds for unknown means of base arms \citep{chen2013combinatorial,chen2016combinatorial,WeiChen2017}, TS-type algorithms do not require the reward function to satisfy the monotonicity on the mean vector of base arms. Benefited from this and other advantages of easier implementation and better practical performances, TS-type algorithms have recently attracted a lot of interest in CMAB problems.
\citet{komiyama2015optimal} consider using the TS algorithm to solve the MP-MAB problem, where the agent needs to select $K$ from $m$ arms to maximize the sum of rewards over these selected $K$ arms. 
They provide an optimal regret upper bound of order $O(\log T/\Delta_{K,K+1})$ for the TS algorithm to solve this problem, where $\Delta_{K,K+1}$ is the reward gap between the $K$-th and $(K+1)$-th optimal arm.  
Later, \citet{WangC18} consider using TS to solve more general CMAB problems where only the Lipschitz condition is assumed to be satisfied. Their regret upper bound of order $O(\log T/\Delta_{\min})$ matches the main order of the CUCB \citep{WeiChen2017} in the same setting. The coefficient of this upper bound was recently improved by Perrault $\etal$ \citep{perrault2020statistical}, who study the same CMAB setting. 
\citet{huyuk2019analysis} extend the analysis of \citep{WangC18} and consider using the TS algorithm to solve the problem of CMAB-T. However, the current regret upper bound is $O(1/p^*)$ worse than CUCB \citep{WeiChen2017}, where $p^*$ is the minimum triggering probability.

All of the above TS-based works need an exact oracle to provide the optimal solution with sampled parameters in each round. However, the exact oracles are usually not feasible as many combinatorial optimization problems, such as the offline problem of OIM and PMC, are NP-hard \citep{kempe2003maximizing}.  
\citet{WangC18} have constructed a problem instance and designed an approximation offline oracle for this problem instance. The analysis has shown that the TS algorithm suffers the linear regret of order $O(T)$ when working with such an approximation oracle. 
However, this oracle is uncommon and artificial, thus cannot represent the performance of TS when working with common approximation oracles.  
It is still a significant open problem that whether TS can perform well with approximation oracles. 


The greedy algorithm is one of the most important methods with approximation guarantees to solve combinatorial optimization problems. 
When the mean vector is known beforehand, we call the problem of finding the action with the best expected reward as \textit{offline} problem. Using the greedy algorithm to solve offline combinatorial problems has been studied for decades, including the problem of shortest spanning subtree \citep{kruskal1956shortest}, shortest connection networks \citep{6773228}, set coverage \citep{chvatal1979greedy}, influence maximization \citep{kempe2003maximizing}, and general submodular optimization \citep{nemhauser1978analysis}. 
There is also a line of studies considering using greedy to solve online problems \citep{pmlr-v19-audibert11a,kveton2014matroid,gabillon2013adaptive,streeter2007online,lin2015stochasticGreedy}, some of them require the exact reward function forms as prior knowledge. 
Among these works the most related to ours is \citet{lin2015stochasticGreedy}, both aiming to solve a general class of online problems.  
\citet{lin2015stochasticGreedy} consider using the online greedy strategy to make decisions based on UCB-type estimators. The algorithm sequentially selects a unit to maximize the current expected reward until no feasible unit can be selected in each round. 
In their setting, a unit conditioned on a set of previously selected units is regarded as an arm and the marginal reward of selecting this unit is the expected reward of this arm. Since the number of combinations of units is usually exponentially large, there is an exponential number of arms to explore, making the algorithm pay exponential memory cost in practical applications. Based on this framework, the algorithm needs to observe the marginal reward after the decision of each step to update the estimate on the arm. However, such observation may be not available as many combinatorial problems treat the action composed of several units as a whole and select them together. Compared to this work, our framework only needs polynomial memory cost and does not require the observation of the marginal reward. 

In this paper, we study the problem of CMAB with the common (approximation) greedy oracle and hope to answer the question of whether the TS algorithm can work well in this setting.

\section{Setting}\label{sec:setting}

The combinatorial multi-armed bandit (CMAB) problem is formulated by a $T$-round learning game between the learning agent and the environment. 
The environment contains $m$ base arms and the arm set is denoted by $[m]=\set{1,2,\ldots,m}$. Each arm $i \in [m]$ is associated with a distribution $D_i$ on $[0,1]$. 
We consider a combinatorial setting where the agent can select several base arms at a time. 
In many applications, different arms usually have structural correlations in the selection decision of the agent. For example, in the PMC problem, base arms (edges) starting from the same node must be selected together.
Thus the base arm set $[m]$ can be further divided into $n$ units, with each unit containing several base arms and a unit of arms will be selected together. 
Let $\cU$ be the collection of all units and $|s|$ be the number of base arms contained in unit $s$ for any $s \in \cU$.  

In each round $t=1,2,\ldots$, the learning agent selects an action 
$S_t \in \cS=\set{S \subseteq \cU: |S|= K}$ to play.  
Here $\cS$ is the set of all candidate actions containing $K$ units. For any action $S$, denote $\cup S = \set{i\in s \text{ for some } s \in S}$ as the set of base arms that belong to units contained in $S$. 
The environment then draws a random output of all base arms $X_t=(X_{t,1},X_{t,2},\ldots,X_{t,m})$ from the distribution $D=D_1\times D_2\times \ldots \times D_m$. For any $t$, $X_{t,i}$ is independent and identically distributed on $D_i$ with expectation $\mu_i$, for any base arm $i$. Let $\mu=(\mu_i)_{i \in [m]}$ be the mean vector. 
We study the semi-bandit feedback \citep{WangC18,chen2016combinatorial,WeiChen2017} where the agent can observe feedback $Q_t=\set{(i,X_{t,i})\mid i \in \cup S_t}$, namely the output of all base arms in units contained in $S_t$. 
Denote $\cH_{t} = \set{(S_\tau,Q_\tau):1\le \tau<t}$ as the history of observations at time $t$. 
The agent finally obtains a corresponding reward $R_t = R(S_t,X_t)$ in this round, which is a function of action $S_t$ and output $X_t$. We assume the expected reward satisfies the following two assumptions, which are standard in CMAB works \citep{chen2016combinatorial,WangC18,huyuk2019analysis,perrault2020statistical,WeiChen2017}. 

\begin{assumption}\label{ass:reward}
	The expected reward of an action $S$ only depends on $S$ and the mean vector $\mu$. That is to say, there exists a function $r$ such that $\EE{R_t} = \mathbb{E}_{X_t\sim D}[R(S_t,X_t)] = r(S_t,\mu)$. 
\end{assumption}

\begin{assumption}\label{ass:lip}{(Lipschitz continuity)}
	There exists a constant $B$ such that for any action $S$ and mean vectors $\mu,\mu'$, the reward of $S$ under $\mu$ and $\mu'$ satisfies
	\begin{align}
		\abs{r(S,\mu)-r(S,\mu')} \le B\sum_{i \in \cup S}\abs{\mu_i -\mu'_i}\,.
	\end{align}
\end{assumption}
When the mean vector $\mu$ is known beforehand, finding the optimal action containing $K$ units is called the \textit{offline} problem. However, the offline problems are usually NP-hard and enumerating all actions to find the best one is not feasible as the number of actions is exponentially large. 
The $\greedy$ algorithm (presented in Algorithm \ref{alg:greedy}) is a common method to solve such offline problems, which is simple to implement and can provide approximate solutions for OIM \citep{kempe2003maximizing} and PMC \citep{chvatal1979greedy}, and exact solutions for MP-MAB\citep{komiyama2015optimal}. 
More specifically, as long as the reward function satisfies monotonicity and submodularity on the action set, the $\greedy$ algorithm can provide solutions with approximate guarantees \citep{nemhauser1978analysis}. 
Moreover, the $\greedy$ algorithm is also popular to serve as a heuristic method in real applications and has good practical performance even without a theoretical guarantee.

\begin{algorithm}[h]
    \caption{$\greedy$ algorithm}\label{alg:greedy}
    \begin{algorithmic}[1]
    \STATE Input: base arm set $[m]$ and mean vector $\mu=(\mu_i)_{i \in [m]}$, unit set $\cU$, action size $K$
    \STATE Initialize: $S_g=\emptyset$
    \FOR{$k=1,2,\cdots,K$}
    \STATE $s_k = \argmax_{s \in \cU \setminus S_g}r(S_g \cup \set{s},\mu)$ \label{alg:greedy:max}
    \STATE $S_g=S_g \cup \set{s_k}$
    \ENDFOR
    \STATE Output: $S_g$
    \end{algorithmic}
\end{algorithm}

We mainly study the CMAB problem with the $\greedy$ oracle. With input $\mu=(\mu_i)_{i \in [m]}$, it sequentially selects $K$ units to maximize the current expected reward. 
To simplify, we assume the $\greedy$'s solution $S_g(\mu)$, abbreviated as $S_g$, is unique, or equivalently the optimal unit in each step $k$ (Line \ref{alg:greedy:max} in Algorithm \ref{alg:greedy}) is unique. The general case with multiple solutions can also be solved and would be discussed later.  
The objective of the learning agent is to maximize the cumulative expected reward over $T$ rounds, or equivalently to minimizing the cumulative expected regret with respect to the $\greedy$'s solution $S_g$, which we call cumulative \textit{greedy regret} \citep{lin2015stochasticGreedy} defined by  
\begin{align}
    R_g(T) = \EE{\sum_{t=1}^T \max\set{r(S_g,\mu)- r(S_t,\mu),0} } \,,\label{eq:regret:def}
\end{align}
where the expectation is taken from the randomness in observations and the online algorithm. 

We call $\greedy$ an $\alpha$-approximate oracle if $r(S_g(\mu'),\mu') \ge \alpha\cdot r(S^*(\mu'),\mu')$ for any input $\mu'$, where $S^*(\mu')$ is the optimal action under $\mu'$. Note when $\greedy$ is $\alpha$-approximate, the upper bound for greedy regret also implies the upper bound for the $\alpha$-approximate regret defined by the cumulative distance between scaled optimal reward $\alpha\cdot r(S^*(\mu),\mu)$ and $r(S_t,\mu)$ over $T$ rounds. The approximate regret is adopted in previous CMAB works based on UCB-type algorithms \citep{chen2016combinatorial,WeiChen2017,wen2017online,li2020online}. It is much weaker than greedy regret as it relaxes the requirements for online algorithms and only needs them to return solutions satisfying the relaxed approximation ratio. 
We discuss more on challenges in analyzing the $\alpha$-approximate regret with TS-type algorithms in Section \ref{sec:discussion}. 

\paragraph{An example of CMAB: probabilistic maximum coverage (PMC)} 
The input for the PMC problem is a weighted bipartite graph $G=(L,R,E)$, where each edge $(u,v)\in E$ is associated with a weight $\mu_{(u,v)}$. Denote $\mu = (\mu_{(u,v)})_{(u,v)\in E}$ as the edge weight vector. The goal is to find a node set $S\subseteq L$ with $|S|=K$ to maximize the number of influenced nodes in $R$, where each node $v\in R$ can be influenced by $u\in S$ with independent probability $\mu_{(u,v)}$. 
The advertisement placement problem can be modeled by PMC, where $L$ is the web page set, $R$ is the user set and $\mu_{(u,v)}$ represents the probability that user $v$ clicks the advertisement on web page $u$. In this application, the user click probabilities are unknown and need to be learned during iterative interactions. 
The PMC problem fits our CMAB framework with each edge being a base arm and edges starting from the same node forming a unit.
The expected reward of an action $S$ is the expected number of nodes finally influenced by it, which is defined as
\begin{align}
r(S,\mu) = \sum_{v \in R} \bracket{ 1 - \prod_{(u,v)\in E, u\in S} \bracket{1-\mu_{(u,v)}}}\,.\label{eq:def:rewardinPMC}
\end{align}
It is proved that the reward function satisfies Assumption \ref{ass:lip} \citep{chen2016combinatorial} and the $\greedy$ oracle can provide an approximate solution with approximation ratio $(1-\frac{1}{e})$ for any input \citep{nemhauser1978analysis}.



\section{Algorithm}\label{sec:alg}

In this section, we introduce the combinatorial Thompson sampling (CTS) algorithm with Beta priors and $\greedy$ oracle (presented in Algorithm \ref{alg:CTS}) for CMAB problems.

\begin{algorithm}[h]
    \caption{CTS algorithm with Beta priors and $\greedy$ oracle}\label{alg:CTS}
    \begin{algorithmic}[1]
    \STATE Input: base arm set $[m]$, unit set $\cU$, action size $K$
    \STATE Initialize: $\forall i \in[m], a_i=b_i=1$ \label{alg:cts:initial}
    \FOR{$t=1,2,\cdots$}
        \STATE $\forall i \in [m]:$ Sample $\theta_{t,i}\sim$Beta$(a_i,b_i)$. Denote $\theta_t = (\theta_{t,1},\theta_{t,2},\cdots,\theta_{t,m})$ \label{alg:cts:sample}
        \STATE Select action $S_t = \greedy([m],\theta_t,\cU,K)$ and receive the observation $Q_t$ \label{alg:cts:select}
        \STATE //Update
        \FOR{$(i,X_{t,i})\in Q_t$} \label{alg:cts:update:start}
            \STATE With probability $X_{t,i},\ Y_{t,i} = 1$; with probability $1-X_{t,i},\ Y_{t,i} = 0$
            \STATE Update $a_i = a_i +Y_{t,i}, b_i = b_i+(1-Y_{t,i})$
        \ENDFOR \label{alg:cts:update:end}
    \ENDFOR
    \end{algorithmic}
\end{algorithm}

The algorithm maintains a Beta distribution with parameters $a_i$ and $b_i$ for each base arm $i\in [m]$. In the beginning, it initializes $a_i=b_i=1, \forall i \in [m]$ (Line \ref{alg:cts:initial}). In each round $t$, the algorithm first samples a parameter candidate $\theta_{t,i}$ from Beta$(a_i,b_i)$ representing the current estimate for $\mu_i$ (Line \ref{alg:cts:sample}). Then the $\greedy$ oracle outputs the solution $S_t$ according to the input vector $\theta_t=(\theta_{t,1},\theta_{t,2},\cdots,\theta_{t,m})$ (Line \ref{alg:cts:select}). Based on the observation feedback, the algorithm then updates the corresponding Beta distributions for observed base arms (Line \ref{alg:cts:update:start}-\ref{alg:cts:update:end}). 


\section{Lower Bound}

We investigate the hardness of the CTS algorithm to solve CMAB problems with $\greedy$ oracle by proving a problem-dependent regret lower bound. 

First, we introduce some notations that will be used in the regret analysis. Recall $S_g$ is the solution returned by the $\greedy$ oracle when the input is $\mu$. We denote it as $S_g = \set{s_{g,1},s_{g,2},\ldots,s_{g,K}}$, where $s_{g,k}$ is the $k$-th selected unit by $\greedy$. Further, define $S_{g,k} = \set{s_{g,1},s_{g,2},\ldots,s_{g,k}}$ as the sequence containing the first $k$ units for any $k\in[K]$. Similarly, let $S_t = \set{s_{t,1},s_{t,2},\ldots,s_{t,K}}$ and $S_{t,k} = \set{s_{t,1},s_{t,2},\ldots,s_{t,k}}$. 
Note $S_{g,0}=S_{t,0}=\emptyset$. 
The corresponding gaps are defined to measure the hardness of the task and the performance of the algorithm. 
\begin{definition}{(Gaps)}\label{def:gap}
    For any unit $s\in \cU$ and index $k\in [K]$ such that $s \notin S_{g,k-1}$, define the marginal reward gap
    \begin{align*}
        \Delta_{s,k} = r(S_{g,k},\mu)-r(S_{g,k-1}\cup \set{s},\mu )
    \end{align*}
    as the reward difference between $S_{g,k}$ and $S_{g,k-1}\cup \set{s}$. According to the $\greedy$ algorithm, we have $\Delta_{s,k}> 0$ for any $k$ such that $s \notin S_{g,k}$. 
    And for any action $S \in \cS$, define $\Delta_S = \max\set{r(S_g,\mu)-r(S,\mu),0}$ as the reward difference from the $\greedy$'s solution $S_g$. Let
    \begin{align*}
        \Delta_{s}^{\min} = \min_{S \in \cS: s \in S}\Delta_S\,,\ \  \Delta_{s}^{\max} = \max_{S \in \cS: s \in S}\Delta_S
    \end{align*}
    be the minimum and maximum reward gap of actions containing unit $s$, respectively. Denote $\Delta_{\max} = \max_{S \in \cS}\Delta_S$ as the maximum reward gap over all suboptimal actions. 
\end{definition}

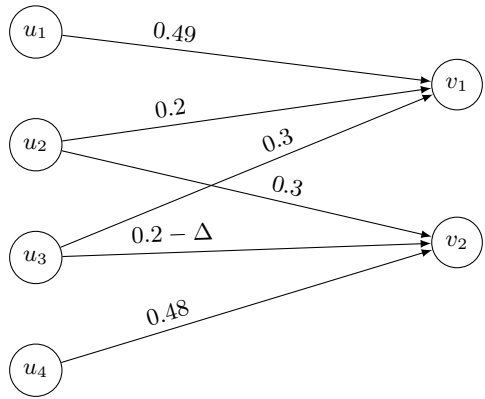
\begin{figure}[hbt!]
\centering
    \begin{tikzpicture}[scale=0.7,font=\small,every node/.style={node distance = 1.5cm}]
    \node (b1) [circle,draw=black] at (-3,0) {$u_1$} ;
    \node (b2) [circle,draw=black,below of=b1]  {$u_2$} ;
    \node (b3) [circle,draw=black,below of=b2]  {$u_3$} ;
    \node (b4) [circle,draw=black,below of=b3] {$u_4$} ;
    
    \node (b7) [circle,draw=black] at (5,-1) {$v_1$} ;
    \node (b8) [circle,draw=black] at (5,-4)  {$v_2$} ;
    
    \draw [-latex](b1)--node [color=black,pos=0.3,above,sloped]{$0.49$}(b7);

    \draw [-latex](b2)--node [color=black,pos=0.3,above,sloped]{$0.2$}(b7);
    \draw [-latex](b2)--node [color=black,pos=0.6,above,sloped]{$0.3$}(b8);
    
    \draw [-latex](b3)--node [color=black,pos=0.6,above,sloped]{$0.3$}(b7);
    \draw [-latex](b3)--node [color=black,pos=0.3,above,sloped]{$0.2-\Delta$}(b8);
    
    \draw [-latex](b4)--node [color=black,pos=0.3,above,sloped]{$0.48$}(b8);
    
    \end{tikzpicture}

    \caption{The underlying graph of the PMC instance used to derive the hardness analysis.}
    \label{fig: instance}
\end{figure}

We take the following PMC problem (shown in Figure \ref{fig: instance}) as the instance to carry out the hardness analysis. Each edge in the graph is a base arm and the set of all outgoing edges from a single node forms a unit. The action size is set to $K=2$. The weight $\mu_{(u,v)}$ of each edge $(u,v)$ is listed on the edges, where we assume $0<\Delta\le 0.04$. The expected reward $r(S,\mu)$ of an action $S$ under $\mu$ is defined as Eq \eqref{eq:def:rewardinPMC}. 
For example, when $u_1$ and $u_2$ are selected, the probability of $v_1$ being influenced is $1-(1-\mu_{(u_1,v_1)})(1-\mu_{(u_2,v_1)}) = 0.592$ and the probability that $v_2$ is influenced is $\mu_{(u_2,v_2)} = 0.3$. The expected reward of $S=\set{u_1,u_2}$ is $r(S,\mu)=0.592+0.3=0.892$. 
For simplicity, we also assume the output of each base arm in each round is exactly its mean. This assumption still satisfies the above properties and is also adopted in previous lower bound proofs \citep{agrawal2013further} to simplify the analysis.

For convenience, we first list the expected reward of each action in this problem in Table \ref{table:reward on instance}. We can find that the greedy solution is $S_g = \set{u_2,u_1}$ with $s_{g,1}=u_2$, $s_{g,2}=u_1$, and $r(S_g,\mu) = 0.892$, while the optimal action is $\set{u_1,u_4}$. The corresponding marginal reward gaps of each unit can be then computed as follows. 
\begin{align*}
    \Delta_{u_1,1} &= 0.01 \,; \\
    \Delta_{u_3,1} &= \Delta,\ \ \ \ \ \ \ \Delta_{u_3,2} = 0.012+0.7\Delta \,;\\
    \Delta_{u_4,1} &= 0.02, \ \ \ \Delta_{u_4,2} = 0.056 \,.
\end{align*}

\begin{table}[htbp]
\centering
\begin{tabular}{c|l|c|l}
\toprule
Action & Expected Reward & Action  & Expected Reward  \\ \hline
\rule{0pt}{10pt}$\displaystyle \set{u_1}$ & $0.49$ & $\displaystyle \set{u_1,u_2}$ &$0.892$   \\\hline 
\rule{0pt}{10pt}$\displaystyle \set{u_2}$ & $0.5$ & $\displaystyle \set{u_1,u_3}$ &$0.843-\Delta$   \\\hline
\rule{0pt}{10pt}$\displaystyle \set{u_3}$ & $0.5-\Delta$ & $\displaystyle \set{u_1,u_4}$ &$0.97$  \\\hline
\rule{0pt}{10pt}$\displaystyle \set{u_4}$ & $0.48$ & $\displaystyle \set{u_2,u_3}$ &$0.88-0.7\Delta$  \\\hline
\rule{0pt}{10pt}$\displaystyle \set{u_3,u_4}$ & $0.884-0.52\Delta$ & $\displaystyle \set{u_2,u_4}$ & $0.836$   \\
\bottomrule
\end{tabular}
\caption{The expected rewards of actions in the problem instance shown in Figure \ref{fig: instance}.} 
\label{table:reward on instance}
\end{table}

In the following, we mainly focus on unit $u_3$ and take it as an example to derive the hardness analysis. 
According to Table \ref{table:reward on instance}, all actions containing $u_3$ are suboptimal actions compared to $S_g$ and $\Delta_{u_3}^{\min}=\Delta_{\set{u_3,u_4}}=0.52\Delta+0.008$. Thus to avoid regret generation, the algorithm should avoid incorporating $u_3$ in the action $S_t$. Intuitively, $u_3$ should be explored at least $\Omega\bracket{\log T / \Delta_{u_3,1}^2} = \Omega\bracket{\log T / \Delta^2}$ times to be distinguished from $s_{g,1}=u_2$ and thus can avoid being selected as the first unit by $\greedy$. However, in each round of exploration for $u_3$, the algorithm needs to pay a constant regret of at least $0.52\Delta+0.008$. Thus the estimation gap $\Delta$ needs to be eliminated by exploration on the denominator of $\Omega\bracket{\log T / \Delta^2}$ cannot be canceled by the actual regret paid in each exploration round.  
Such mismatch would cause the greedy regret at least of order $\Omega\bracket{\log T / \Delta^2}$. 

We give the formal lower bound for both the expected number of selections of each unit and the cumulative greedy regret in the following Theorem \ref{thm:lower bound}.

\begin{theorem}{(Lower bound)}\label{thm:lower bound}
Using the CTS algorithm with Gaussian priors and $\greedy$ oracle to solve the CMAB problem shown in Figure \ref{fig: instance}, when $T$ is sufficiently large, we have
    \begin{align}
        \EE{N_{T+1,s}} =  \Omega \bracket{\frac{\log T}{\Delta_{s,1}^2}}\,,  \label{eq:lower:pulltime}
    \end{align}
    for any $s \neq s_{g,1}=u_2$, where $N_{T+1,s} = \sum_{t=1}^T \bOne{s\in S_t}$ is the number of rounds when $s$ is contained in the selected action set $S_t$. 

    Further, the cumulative greedy regret satisfies
    \begin{align}
        R_g(T) =  \Omega\bracket{ \frac{\log T}{\Delta_{u_3,1}^2} } =  \Omega\bracket{ \frac{\log T}{\Delta^2} }\,. \label{eq:lower:regret}
    \end{align}
\end{theorem}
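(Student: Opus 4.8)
The plan is to first reduce the regret bound \eqref{eq:lower:regret} to the pull-count bound \eqref{eq:lower:pulltime} for $s=u_3$, and then establish the latter through an anti-concentration argument on the Gaussian posterior samples. For the reduction, note from Table~\ref{table:reward on instance} that every action containing $u_3$ is strictly suboptimal relative to $S_g$, so $\Delta_{u_3}^{\min} = 0.008 + 0.52\Delta \ge 0.008$ is a constant bounded away from zero. Since $\Delta_{S_t}\ge \Delta_{u_3}^{\min}$ whenever $u_3\in S_t$, and $R_g(T)=\EE{\sum_{t=1}^T \Delta_{S_t}}$ by definition, we obtain
\begin{align*}
R_g(T) = \EE{\sum_{t=1}^T \Delta_{S_t}} \ge \Delta_{u_3}^{\min}\cdot \EE{N_{T+1,u_3}}\,,
\end{align*}
so \eqref{eq:lower:regret} follows immediately once \eqref{eq:lower:pulltime} is proved for $s=u_3$.

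For the pull count I would track the sub-event that $\greedy$ selects $u_3$ as its \emph{first} unit, which already forces $u_3\in S_t$. Under Gaussian priors and the exact-mean simplification, the sampled single-unit reward $r(\set{u_3},\theta_t)=\theta_{t,(u_3,v_1)}+\theta_{t,(u_3,v_2)}$ is Gaussian with mean exactly $\mu_{(u_3,v_1)}+\mu_{(u_3,v_2)}=0.5-\Delta$ and variance $\Theta(1/N_{t,u_3})$, and the sampled values of the four units are mutually independent because distinct source nodes share no edges. Taking the threshold $0.5$ (the mean of the hardest competitor $u_2$), the event $\set{r(\set{u_3},\theta_t)>0.5}\cap\set{r(\set{u_j},\theta_t)<0.5 \text{ for } j=1,2,4}$ is contained in $\set{u_3 \text{ selected first}}$, and by independence its probability factorizes. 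Since $u_1,u_2,u_4$ all have means $\le 0.5$, each factor $\PP{r(\set{u_j},\theta_t)<0.5}\ge 1/2$ \emph{regardless} of how often those units have been sampled, so their product is at least $1/8$; this is the step that sidesteps any ``burn-in'' requirement on the competitors. The remaining factor is the Gaussian upper tail $\PP{r(\set{u_3},\theta_t)>0.5}=\Phi\bigl(-\Delta\sqrt{N_{t,u_3}}/(\sqrt2\sigma)\bigr)$, which by the standard tail lower bound stays at least $\underline p \gtrsim T^{-\gamma}/\sqrt{\log T}$ with $\gamma=c/(4\sigma^2)<1/2$ as long as $N_{t,u_3}<L:=c\log T/\Delta^2$ for a small enough constant $c$.

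Finally I would turn this uniform per-round bound into the cumulative one via a stopping-time/martingale argument. Let $\tau$ be the first round at which $N_{t,u_3}$ reaches $L$, capped at $T$. Before $\tau$ we have $N_{t,u_3}<L$ and hence $\PP{u_3\in S_t\mid \cH_t}\ge \underline p$, so the compensator $\sum_{t\le\tau}\PP{u_3\in S_t\mid\cH_t}$ is at least $\tau\,\underline p$. Were $u_3$ pulled fewer than $L$ times over all $T$ rounds (i.e.\ $\tau=T$), then $\sum_{t=1}^T\bOne{u_3\in S_t}<L$ while the compensator is $\ge T\underline p\gtrsim T^{1-\gamma}/\sqrt{\log T}$; because $\gamma<1/2$, for large $T$ this exceeds $L+O(\sqrt{T\log T})$, contradicting the Azuma/Freedman concentration of the martingale $\sum_t(\bOne{u_3\in S_t}-\PP{u_3\in S_t\mid\cH_t})$. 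Hence $N_{T+1,u_3}\ge L$ with probability $1-o(1)$, giving $\EE{N_{T+1,u_3}}=\Omega(\log T/\Delta^2)$. The same template, with $\Delta$ replaced by $\Delta_{s,1}$ and the threshold by the relevant competitor mean, yields \eqref{eq:lower:pulltime} for $s=u_1,u_4$ as well.

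The step I expect to be most delicate is reconciling the per-round lower bound with the martingale concentration: one must verify that the Gaussian-tail exponent $\gamma=c/(4\sigma^2)$ can be kept below $1/2$ (so that $T\underline p$ dominates the $O(\sqrt{T\log T})$ fluctuation) while $L=c\log T/\Delta^2$ remains nontrivial, and that the dependence of $\underline p$ on the random counts $N_{t,u_1},N_{t,u_2},N_{t,u_4}$ is handled \emph{through} the stopping time rather than via an a priori concentration of the competitors, which the threshold-at-$0.5$ device is precisely designed to avoid.
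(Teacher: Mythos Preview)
Your proposal is correct and shares the same skeleton as the paper: the reduction of \eqref{eq:lower:regret} to \eqref{eq:lower:pulltime} via $\Delta_{u_3}^{\min}\ge 0.008$, tracking the sub-event $\{s=s_{t,1}\}$, factoring by independence across units, fixing the threshold at $r(S_{g,1},\mu)=0.5$ so that the competitors' probabilities are uniformly $\ge 1/2$ (the paper uses $(1/2)^{|s_{g,1}|}$ for $u_2$ but the idea is the same), and applying Gaussian anti-concentration conditioned on $N_{t,s}$ being small---all of this matches the paper exactly.

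The one genuine difference is the aggregation step. The paper avoids your martingale/Azuma argument by a direct expectation dichotomy: with $A_s(t)=\{N_{t,s}\ge L\}$ for $L\asymp \log T/\Delta_{s,1}^2$, either $\PP{A_s(t)}\ge 1/2$ for some $t\le T$ (so $\EE{N_{T+1,s}}\ge L/2$ immediately), or $\PP{\neg A_s(t)}>1/2$ for every $t$, in which case the anti-concentration bound gives $\PP{s\in S_t}\ge p\asymp \log T/(T\Delta_{s,1}^2)$ for every $t$, and summing yields $\EE{N_{T+1,s}}\ge Tp=\Omega(\log T/\Delta_{s,1}^2)$. This is shorter and entirely sidesteps the issue you flag as delicate: there is no exponent $\gamma<1/2$ to tune and no $\sqrt{T\log T}$ fluctuation term to dominate, because the threshold $L$ is chosen so that the per-round probability $p$ already satisfies $Tp\asymp L$. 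Your route works but is heavier machinery; its payoff is a high-probability lower bound on $N_{T+1,s}$, which the paper does not obtain (nor need).
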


The proof of Theorem \ref{thm:lower bound} follows directly the intuition of the above hardness analysis. Due to the space limit, we include the detailed proof in Appendix \ref{sec:proof:lower}.
The reason why we consider using Gaussian priors to derive the lower bound analysis is that we hope to use its concentration and anti-concentration inequalities.
The analysis can directly apply to other types of prior distributions if similar inequalities can be provided. 
The main operations of CTS with Gaussian priors are very similar to that of Algorithm \ref{alg:CTS}, while the only difference is on the prior distribution for unknown parameters and the corresponding updating mechanism. To be self-contained, we also present the detailed CTS algorithm with Gaussian priors in Appendix \ref{sec:proof:lower}.

\citet{lin2015stochasticGreedy} also show a lower bound for greedy regret of order $\Omega(\log T/\Delta^2)$ with $\Delta\in(0,1/4)$. However, the problem instance used to derive this lower bound is not a CMAB problem and thus their result is not comparable with Theorem \ref{thm:lower bound}. 


\section{Upper Bound}\label{sec:regret}
By investigating the properties of the CTS algorithm and the $\greedy$ oracle, we also provide a problem-dependent regret upper bound for Algorithm \ref{alg:CTS} to solve general CMAB problems. 

\begin{theorem}{(Upper bound)}\label{thm:upper}
The cumulative greedy regret of Algorithm \ref{alg:CTS} can be upper bounded by
    \begin{align}
        R_g(T) \leq &\sum_{s \neq s_{g,1} } \max_{k:s \notin S_{g,k}}\frac{6B^2\abs{s}^2\Delta_s^{\max}\log T}{\bracket{\Delta_{s,k}-2B\abs{\cup S_g} \varepsilon }^2} + \sum_{k\in [K]}\frac{C}{\varepsilon^2}\bracket{\frac{C'}{\varepsilon^4}}^{\abs{s_{g,k}}}\Delta_{\max}\notag\\
        &+\bracket{\abs{\cup S_g} \bracket{2+\frac{8}{\varepsilon^2}}+4m}\Delta_{\max} \label{thm:upper:concrete}\\
        = &O\bracket{\sum_{s \neq s_{g,1} } \max_{k:s \notin S_{g,k}}\frac{B^2\abs{s}^2\Delta_{\max}\log T}{\Delta_{s,k}^2}}  \label{thm:upper:order}\,,
    \end{align}
    for any $\varepsilon$ such that $\forall s\neq s_{g,1}$ and $k$ satisfying $s\notin S_{g,k}$, $\Delta_{s,k}> 2B\abs{\cup S_g}\varepsilon $, where $B$ is the coefficient of the Lipschitz continuity condition, $\abs{\cup S_g}$ is the number of base arms that belong to the units contained in $S_g$, $C$ and $C'$ are two universal constants. 
\end{theorem}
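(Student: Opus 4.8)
The plan is to bound the greedy regret by charging each round with $S_t \neq S_g$ to the \emph{first} unit at which the $\greedy$ run on the sampled vector $\theta_t$ departs from the run on $\mu$. Since the $\greedy$ solution is unique, $S_t\neq S_g$ forces a smallest index $k^\ast$ with $s_{t,k^\ast}\neq s_{g,k^\ast}$; by minimality the prefixes agree, $S_{t,k^\ast-1}=S_{g,k^\ast-1}$, so the deviating unit $s:=s_{t,k^\ast}$ satisfies $s\notin S_{g,k^\ast}$ and, being the $\greedy$ maximizer at step $k^\ast$, obeys $r(S_{g,k^\ast-1}\cup\{s\},\theta_t)\ge r(S_{g,k^\ast},\theta_t)$. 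Because $s\in S_t$, the round's regret is at most $\Delta_s^{\max}$, and $s_{g,1}$ can never be a first deviation (it lies in every $S_{g,k}$). Hence $R_g(T)\le\sum_{s\neq s_{g,1}}\Delta_s^{\max}\,\EE{\text{number of rounds }s\text{ is the first deviating unit}}$, and it remains to bound the expected count for each $s$.

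Next I would translate the deviation inequality into a statement about how far the samples must stray from $\mu$. Adding the two Lipschitz bounds (Assumption~\ref{ass:lip}) for $r(S_{g,k^\ast-1}\cup\{s\},\cdot)$ and $r(S_{g,k^\ast},\cdot)$ and using $r(S_{g,k^\ast},\mu)-r(S_{g,k^\ast-1}\cup\{s\},\mu)=\Delta_{s,k^\ast}$, the event $r(S_{g,k^\ast-1}\cup\{s\},\theta_t)\ge r(S_{g,k^\ast},\theta_t)$ implies
\[\Delta_{s,k^\ast}\le 2B\sum_{i\in\cup S_g}\abs{\theta_{t,i}-\mu_i}+B\sum_{i\in\cup s}\abs{\theta_{t,i}-\mu_i}\,.\]
I would then split on the event $A_t$ that every greedy base arm is well estimated, $\abs{\theta_{t,i}-\mu_i}\le\varepsilon$ for all $i\in\cup S_g$. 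On $A_t$ the first term is at most $2B\abs{\cup S_g}\varepsilon$, so deviation forces $\sum_{i\in\cup s}\abs{\theta_{t,i}-\mu_i}\ge(\Delta_{s,k^\ast}-2B\abs{\cup S_g}\varepsilon)/B$; off $A_t$, some greedy arm is badly sampled, a rare event to be absorbed into the lower-order terms.

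The crucial structural point is that the units partition $[m]$, so under semi-bandit feedback all base arms of a unit are observed exactly when that unit is played: whenever $s$ is selected every $i\in\cup s$ gains one observation, and these $\abs{s}$ counts stay equal. Thus each selection of $s$ drives the empirical means of all of $s$'s arms toward $\mu$ simultaneously, and the event $\sum_{i\in\cup s}\abs{\theta_{t,i}-\mu_i}\ge\delta$ becomes self-limiting: once $s$ has been played on the order of $\abs{s}^2\log T/\delta^2$ times it ceases to hold save for rare fluctuations. Taking $\delta=(\Delta_{s,k}-2B\abs{\cup S_g}\varepsilon)/B$ and noting that the smallest admissible $\delta$ (hence largest count) comes from the smallest gap $\Delta_{s,k}$ over $k$ with $s\notin S_{g,k}$ produces exactly the $\max_{k:s\notin S_{g,k}}6B^2\abs{s}^2\log T/(\Delta_{s,k}-2B\abs{\cup S_g}\varepsilon)^2$ leading term, explaining both the $\abs{s}^2$ factor and the maximum over $k$.

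I expect the main obstacle to be the Thompson-sampling concentration analysis that makes the preceding count rigorous. Converting ``$s$ played $\ge N$ times but still $\sum_{i\in\cup s}\abs{\theta_{t,i}-\mu_i}\ge\delta$'' into an $O(\log T/\delta^2)$-in-expectation bound requires, per base arm, a Chernoff bound controlling the empirical mean together with the Beta tail controlling the sample given the empirical mean, in the style of \citet{agrawal2013further}; this yields the constant $6$ and, via the rounds where a greedy arm is mis-sampled, the additive $\bracket{\abs{\cup S_g}\bracket{2+8/\varepsilon^2}+4m}\Delta_{\max}$ term. The more delicate piece is the anti-concentration needed to guarantee that at each greedy step the correct unit $s_{g,k}$ is chosen often enough; ensuring the sampled reward of the $\abs{s_{g,k}}$-arm unit $s_{g,k}$ is simultaneously optimistic forces a probability that degrades exponentially in $\abs{s_{g,k}}$, which is the source of the $\sum_{k}(C/\varepsilon^2)(C'/\varepsilon^4)^{\abs{s_{g,k}}}\Delta_{\max}$ term. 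Collecting the leading count, the mis-sampling rounds, and this exploration term gives \eqref{thm:upper:concrete}, and dropping lower-order terms yields the order bound \eqref{thm:upper:order}.
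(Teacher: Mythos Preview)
Your plan is the paper's: charge each bad round to the first step $k$ with $s_{t,k}\neq s_{g,k}$, translate the greedy inequality $r(S_{g,k-1}\cup\{s_{t,k}\},\theta_t)\ge r(S_{g,k},\theta_t)$ via Lipschitz into a sample-error bound, and split into ``deviating unit under-sampled'' (main term) versus ``a greedy unit mis-sampled'' (lower-order). Two places are looser than the paper, and the second conceals a missing idea.

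First, a minor point: your event $A_t=\{\forall i\in\cup S_g:\abs{\theta_{t,i}-\mu_i}\le\varepsilon\}$ is coarser than needed, because your Lipschitz bound unnecessarily sums over all of $\cup S_g$. At the first-deviation step $k$ only arms in $\cup S_{g,k}$ enter the comparison, and the paper conditions only on $\norm{\theta_{t,S_{g,k-1}}-\mu_{S_{g,k-1}}}_\infty\le\varepsilon$, handling $s_{g,k}$ separately. With your coarser $A_t$, $\neg A_t$ can be triggered by a badly-sampled arm in some $s_{g,j}$ with $j>k$ that is never in $S_t$, and your ``rare event'' dismissal gives no mechanism to bound those rounds. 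Tightening the Lipschitz bound to $\cup S_{g,k}$ fixes this.

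Second, and substantively, your account of the anti-concentration term skips the structural claim that makes it applicable. It is not enough that the $\abs{s_{g,k}}$ samples of $s_{g,k}$ are simultaneously $\varepsilon$-close to $\mu$ with some probability; you must also show that \emph{whenever} they are, $s_{g,k}$ is actually selected at step $k$, so that it is observed and its posterior tightens. The paper isolates this as the event $\cE_{k,1}(t)$: for every $\theta'$ agreeing with $\theta_t$ outside $s_{g,k}$ and with $\norm{\theta'_{s_{g,k}}-\mu_{s_{g,k}}}_\infty\le\varepsilon$, greedy on $\theta'$ selects $s_{g,k}$ at step $k$. The core of the paper's key lemma is proving that once the prefix is $\varepsilon$-good and the deviating unit $s_{t,k}$ already has $N_{t,s_{t,k}}>L(s_{t,k})$ (equivalently, $\neg D_k(t)$ holds), $\cE_{k,1}(t)$ is forced---this requires checking both that the prefix $S_{g,k-1}$ would still be selected under $\theta'$ and that $s_{g,k}$ beats every competitor at step $k$ under $\theta'$. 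Because $\cE_{k,1}(t)$ depends on $\theta_t$ only through coordinates outside $s_{g,k}$, one can then run a time-varying-geometric argument conditional on $\cE_{k,1}$ and invoke the Beta anti-concentration bound of \citet{perrault2020statistical} to obtain the $\sum_k(C/\varepsilon^2)(C'/\varepsilon^4)^{\abs{s_{g,k}}}$ term. Your sketch treats anti-concentration as a standalone exploration cost (``optimism with probability degrading in $\abs{s_{g,k}}$'') without identifying this ``if well-sampled then chosen'' claim; without it the geometric argument has no footing, since a good sample of $s_{g,k}$ need not by itself cause $s_{g,k}$ to be selected.
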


Due to the space limit, we provide the proof sketch of Theorem \ref{thm:upper} in Section \ref{sec:sketch} and defer the formal proof to Appendix \ref{sec:proof:upper}. 
In order to better compare the upper and lower bounds, we also analyze the greedy regret of the CTS algorithm with Gaussian priors in Appendix \ref{sec:gauss_upper}, which achieves the same order of the upper bound with Theorem \ref{thm:upper} only up to some constant factors.

\subsection{Discussions}\label{sec:discussion}

\paragraph{Challenges in analyzing the $\alpha$-approximate regret with CTS} 
The $\alpha$-approximate regret is first brought up in analyzing UCB-type algorithms \citep{chen2013combinatorial,chen2016combinatorial,WeiChen2017}. Under UCB, benefiting from the monotonicity between the true parameter $\mu$ and the UCB parameter $\bar{\mu}$, the $\alpha$-approximate regret can be deducted as 
\begin{align*}
\alpha \cdot r(S^*,\mu) - r(S_t,\mu) \le \alpha \cdot  r(S^*,\bar{\mu}) - r(S_t,\mu) \le r(S_t,\bar{\mu}) - r(S_t,\mu) \le \sum_{i \in \cup S_t} |\bar{\mu}_i - \mu_i| \,,
\end{align*}
where $S^* \in \argmax_{S\in \cS}r(S,\mu)$ is an exact optimal action under real parameter $\mu$. 
Thus it only needs to bound the number of selections of bad action $S_t$ to get an upper bound for the $\alpha$-approximate regret. 
However, under CTS, since there is no monotonicity between the true parameter $\mu$ and the surrogate parameter $\theta$, the approximate regret can only be deducted as
\begin{align*}
\alpha\cdot r(S^*,\mu) - r(S_t,\mu) &\le \alpha\cdot r(S^*,\mu) - \alpha \cdot r(S^*,\theta) +\alpha \cdot r(S^*,\theta) -  r(S_t,\mu)\\
& \le  \alpha\cdot r(S^*,\mu) - \alpha\cdot r(S^*,\theta) + r(S_t,\theta) - r(S_t,\mu)\\
&\le \alpha \sum_{i \in \cup S^*} |\theta_i - \mu_i| + \sum_{i \in \cup S_t} |\theta_i - \mu_i|   \,.
\end{align*}
To get an upper bound for the RHS, it requires a sufficient number of selections of the exact optimal action $S^*$, which may not be the case with approximate oracles like the example shown in Theorem \ref{thm:lower bound}. 
Thus the $\alpha$-approximate regret may not well fit TS-type algorithms.

\paragraph{Tightness of the upper bound}
We now discuss the tightness of the regret upper bound in Theorem \ref{thm:upper} based on the problem instance shown in Figure \ref{fig: instance}. 
Specific to this problem, we have $|s|\le 2$ for all $s$ since each node has no more than $2$ outgoing edges. And based on \citet[Theorem 4]{li2020online}, the coefficient of the Lipschitz condition in this problem is $B=1$. 
When $0<\Delta\le 0.04$, we have $\Delta_{s,1} = \min_{k: s \notin S_{g,k}}\Delta_{s,k}$ for any $s \neq s_{g,1} = u_2$, and $\Delta_{\max} = \Delta_{\set{u_2,u_4}}=0.056$ is a constant. Thus our regret upper bound in this problem instance is of order
\begin{align*}
    O\bracket{\sum_{s \neq s_{g,1} } \max_{k:s \notin S_{g,k}}\frac{B^2\abs{s}^2\Delta_{\max}\log T}{\Delta_{s,k}^2}} = O\bracket{\sum_{s \in \set{u_1,u_3,u_4} } \frac{\log T}{\Delta_{s,1}^2}} =O\bracket{ \frac{\log T}{\Delta^2}}  \,,
\end{align*}
where the last equality holds since $\Delta_{u_1,1} = 0.01, \Delta_{u_4,1} = 0.02$ are constants and $\Delta_{u_3,1} = \Delta$. 

We can see our regret upper bound matches the lower bound of \eqref{eq:lower:regret} in Theorem \ref{thm:lower bound} only up to some constant factors in this specific problem instance. 

\paragraph{Comparison with MAB} When each unit contains only one base arm and the action size is $K=1$, our CMAB framework recovers the MAB problem and the $\greedy$ oracle can provide the exact optimal solution. Thus we can also compare our regret upper bound with the theoretical results of the TS algorithm in MAB problems.
In the MAB problem, the expected reward of each action is exactly the mean of the base arm contained in this action. Thus the Lipschitz coefficient is just $B=1$ and $|s|=1$ for any unit $s$. The optimal action is $S_g=S_{g,1}$ with $\abs{\cup S_g}=1$. And for any unit $s \neq s_{g,1}$, we have $\Delta_s^{\max} = \Delta_{s,1}$. Thus, according to \eqref{thm:upper:concrete} of Theorem \ref{thm:upper}, the regret upper bound of Algorithm \ref{alg:CTS} in MAB problems is of order $O\bracket{\sum_{s \neq s_{g,1} } \frac{\log T}{\Delta_{s,1}}}$, which recovers the main order of the regret upper bound of TS for MAB problems \citep{agrawal2013further}.

\paragraph{Comparison with \citet{lin2015stochasticGreedy}} 
Though \citet{lin2015stochasticGreedy} also study greedy regret, the results are not directly comparable in general since the setting studied in \citet{lin2015stochasticGreedy} is not a CMAB setting. We find that the PMC problem under a bar graph in these two settings can be equivalent, where a bar graph is a special bipartite graph with each left node's outdegree being $1$ (indegree being $0$) and each right node's indegree being $1$ (outdegree being $0$). In this case, our greedy regret upper bound is of order $O(m \log T/\Delta^2)$ and theirs is $O(mK \log T/\Delta^2)$. So ours is $O(K)$ better than theirs. Even in this case, their algorithm needs to estimate $O(m \cdot 2^m)$ parameters, while Algorithm \ref{alg:CTS} is more efficient and only needs to estimate $O(m)$ parameters.

\paragraph{The definition of the marginal reward gap } Recall that the $\greedy$ oracle provides approximate solutions for problems whose expected reward satisfies monotonicity and submodularity on the action set. Formally, the submodularity means for any action $S\subseteq T$ and unit $s \notin T$, there is $r(S\cup \set{s},\mu)-r(S,\mu)\ge r(T\cup \set{s},\mu)-r(T,\mu)$, which characterizes the phenomenon of diminishing returns. One may concern that in these problems, due to the submodularity, the marginal reward gap $\Delta_{s,k}$ for larger $k$ may become much smaller and the main order of the upper bound thus blows up. We clarify that the submodularity cannot imply the relationships among marginal reward gaps $\Delta_{s,k}$ for different $k \in [K]$. The problem instance in Figure \ref{fig: instance} satisfying submodularity \citep{kempe2003maximizing} indicates that the marginal reward gap $\Delta_{s,k}$ does not necessarily decrease with the increase of $k$.

\subsection{Proof Sketch}\label{sec:sketch}

In this section, we briefly introduce the proof idea of Theorem \ref{thm:upper}. At any step $k \in [K]$, for any unit $s \notin S_{g,k}$, the selection of $s$ in action $S_t$ may force the $\greedy$ oracle to choose a worse action in subsequential steps and make CTS suffer constant regret. A sufficient condition for generating zero greedy regret in round $t$ is that each unit $s_{t,k}$ selected in step $k$ is actually $s_{g,k}$ for any $k \in [K]$. 
Thus to bound the cumulative greedy regret, we sequentially analyze whether $s_{g,k}$ is appropriately selected at each step $k$.

Recall that under the framework of the CTS algorithm, the $\greedy$ oracle sequentially selects $s_{t,k}$ for $k \in [K]$ based on $\theta_t$ sampled from posterior distributions in round $t$.
Focusing on step $k=1$, to ensure $s_{t,1}=s_{g,1}$, the algorithm needs to guarantee the accurate estimations $\theta_{t,s}$ for any unit $s\in \cU$ such that $r(\set{s},\theta_t) < r(S_{g,1},\theta_t), \forall s \neq s_{g,1}$, where $\theta_{t,s}$ is the projection of $\theta_t$ on unit $s$. 
We first assume $s_{g,1}$ is already estimated well. Then based on this assumption, when all of the other units $s\neq s_{g,1}$ have been explored $O(\log T/\Delta_{s,1}^2)$ times and thus estimated accurately, $s_{g,1}$ would be selected at the first step with high probability.  
But if after any other unit $s \neq s_{g,1}$ has already been estimated well, $s_{g,1}$ is still not selected appropriately, we can conclude that the estimations for $s_{g,1}$ are not accurate enough.
In this case, the Beta posterior for $s_{g,1}$ tends to be uniformly distributed. When CTS sample $\theta_{t,s_{g,1}}$ from its Beta posterior, with constant probability there would be $r(S_{g,1},\theta_t) > r(\set{s},\mu) \approx r(\set{s},\theta_t)$ for any unit $s \neq s_{g,1}$. Thus after some rounds, $s_{g,1}$ would be selected for enough times and also estimated accurately. In the following rounds, $s_{g,1}$ would be selected appropriately at the first step with high probability. Above all, the expected number of misselections at the first step can be bounded. 

The above analysis can apply to other cases when $k=2,3,\ldots,K$. Based on the correct selections in the first $k-1$ steps, the misselection of $s_{t,k}$ also comes from the bad estimations for both $s_{g,k}$ and other units $s \notin S_{g,k}$. To distinguish $r(S_{g,k-1}\cup \set{s},\theta_t)$ from $r(S_{g,k},\theta_t)$, those units need to be explored at least $O(\log T/\Delta_{s,k}^2)$ times. 

According to the above analysis, for each unit $s \neq s_{g,1}$, we define the exploration price as
\begin{align}
    L(s) = O\bracket{\max_{k:s \notin S_{g,k}}\frac{\log T}{\Delta_{s,k}^2}}\,.
\end{align}
To avoid being incorrectly selected at some step $k$, each unit $s \notin S_{g,k}$ needs to be explored for at least $L(s)$ times. The sum of $L(s)$ over all units $s \neq s_{g,1}$ leads to the main order of the regret upper bound in Theorem \ref{thm:upper}.

\subsection{Extension to Multiple-solution Case}\label{sec:discuss}
We can also extend the analysis of the regret upper bound to the case where multiple solutions may be returned by $\greedy$ with input $\mu$, or equivalently the optimal unit in each step $k$ (Line \ref{alg:greedy:max} in Algorithm \ref{alg:greedy}) may not be unique. Let 
\begin{align*}
\sigma_K = \set{\set{s_1,s_2,\ldots,s_K}: s_1 \in \argmax_{s}r(\set{s},\mu),  \ldots, s_K \in \argmax_{s \notin \set{s_1,\ldots,s_{K-1}}}r(\set{s_1,\ldots,s},\mu) }
\end{align*}
be the set of all actions that are possibly returned by the $\greedy$ oracle when the input is $\mu$. Here we do not care how $\greedy$ breaks the tie at each step and regard this process as a black box. 
In order to take into account the worst case where $\greedy$ always returns a solution with minimum reward compared to other possible solutions, we define $S_g \in \argmin_{S \in \sigma_K}r(S,\mu)$ as one of $\greedy$'s possible solutions with the minimum expected reward and consider the cumulative greedy regret defined in Eq \eqref{eq:regret:def}. 


The regret analysis of Algorithm \ref{alg:CTS} in this case is similar to the proof of Theorem \ref{thm:upper}.  A sufficient condition for generating zero regret in round $t$ is that the selected action $S_t$ falls into the set $\sigma_K$. Thus to bound the cumulative greedy regret, we sequentially analyze whether the unit $s_{t,k}$ selected in each step $k$ is an optimal unit conditioned on the previously selected units $S_{t,k-1}$. 
For completeness, we include the regret upper bound as well as the detailed proof for this case in Appendix \ref{sec:proof:multiple}.

\section{Conclusion}\label{sec:conclusion}
In this paper, we aim to answer the question of whether the convergence analysis of TS can be extended beyond exact oracles in the CMAB area. Taking the common offline (approximation) $\greedy$ oracle as an example, we derive the hardness analysis of CTS for CMAB problems based on a constructed CMAB problem instance. When using CTS with $\greedy$ oracle to solve this problem, we find that the algorithm needs to explore at least $\Omega(\log T/\Delta^2)$ rounds to distinguish suboptimal units from the optimal unit at some step. However, at least constant regret needs to be paid for each exploration round. The mismatch between the gap to be distinguished and the actually paid regret forces the algorithm to pay the cumulative greedy regret of order $\Omega(\log T/\Delta^2)$. We also provide an almost matching problem-dependent regret upper bound for the CTS algorithm with $\greedy$ oracle to solve CMAB problems. The upper bound is tight on the constructed problem instance only up to some constant factors and also recovers the main order of TS when solving MAB problems. 

An interesting future direction is to extend the current CMAB framework to the case with probabilistically triggered arms (CMAB-T). The CMAB-T framework can model the OIM problem on general social networks. As shown in \citet{li2020online}, using UCB-type algorithms to solve such a problem may face great challenges on the computation efficiency. This problem is expected to be avoided by TS-type algorithms since TS would sample candidate parameters to escape the computation of complicated optimization problems. 
However, the current proof idea based on each selection step of the $\greedy$ oracle (proof of Lemma \ref{lem:key}) cannot directly apply to this setting as different units may probabilistically trigger some common base arms. New proof techniques are required to derive the theoretical guarantee of the TS algorithm with the $\greedy$ oracle in this framework.



\bibliographystyle{apalike}
\bibliography{ref}

\newpage

\appendix


\section{Notations}

Before the proofs, we first introduce some notations that would be used in the regret analysis. 

For any base arm $i \in [m]$, let $N_{t,i} = \sum_{\tau<t}\bOne{i \in \cup S_\tau}$ be the number of observations of $i$ and $\hat{\mu}_{t,i} = \frac{1}{N_{t,i}}\sum_{\tau<t: i \in \cup S_\tau} X_{\tau,i}$ be the empirical mean outcome of $i$ before the start of round $t$. Denote $\hat{\mu} = (\hat{\mu}_{t,1},\hat{\mu}_{t,2},\ldots,\hat{\mu}_{t,m})$ as the empirical mean vector. Since a unit of base arms will be selected together, we abuse a bit the notation $N_{t,s}=\sum_{\tau<t}\bOne{s \in S_\tau}$ to represent the number of selections of unit $s$ before the start of round $t$. For any mean vector $\theta \in [0,1]^m$, denote $\theta_s$ and $\theta_S$ as the projection of $\theta$ on unit $s$ and action $S$.



\section{Proof of Theorem \ref{thm:lower bound}}\label{sec:proof:lower}

To be self-contained, we present the detailed CTS algorithm with Gaussian priors and $\greedy$ oracle in Algorithm \ref{alg:CTS:gaussian}. The main operation of the algorithm is the same as Algorithm \ref{alg:CTS}, the only difference is on the prior distribution for unknown parameters and the corresponding updating mechanism. 
As we previously discussed, the reason for using the Gaussian distribution to derive the lower bound analysis is that we hope to use its concentration and anti-concentration inequalities. The analysis directly applies to other types of prior distributions if similar inequalities can be provided. 

In this algorithm, an initialization phase is introduced to ensure at least one observation has been collected for each base arm (Line \ref{alg:cts:initial}). In each round $t$, the posterior for $\mu_i$ is given by $\cN(\hat{\mu}_{t,i},\frac{1}{N_{t,i}})$, a Gaussian distribution with mean $\hat{\mu}_{t,i}$ and variance $\frac{1}{N_{t,i}}$.
If arm $i$ is observed in this round, the mean and variance of its Gaussian distribution will be updated as Line \ref{eq:alg:gaus:update}. 

\begin{algorithm}[h]
    \caption{CTS algorithm with Gaussian priors and $\greedy$ oracle}\label{alg:CTS:gaussian}
    \begin{algorithmic}[1]
    \STATE Input: base arm set $[m]$, unit set $\cU$, action size $K$
    \STATE Initialization: for each unit $s\in \cU$, select an arbitary action $S\in \cS$ such that $s \in S$. Update $N_{1,i}$ and $\hat{\mu}_{1,i}$ for any $i \in [m]$ based on observations in this phase. \label{alg:cts:gaus:ini}
    \FOR{$t=1,2,\cdots$}
        \STATE $\forall i \in [m]:$ Sample $\theta_{t,i}\sim\cN(\hat{\mu}_{t,i},\frac{1}{N_{t,i}})$. Denote $\theta_t = (\theta_{t,1},\theta_{t,2},\cdots,\theta_{t,m})$ 
        \STATE Select action $S_t = \greedy([m],\theta_t,\cU,K)$ and receive the observation $Q_t$ 
        \FOR{$(i,X_{t,i})\in Q_t$} 
            \STATE $\hat{\mu}_{t+1,i} = \frac{\hat{\mu}_{t,i}N_{t,i}+X_{t,i}}{N_{t,i}+1}$, $N_{t+1,i} = N_{t,i}+1$ \label{eq:alg:gaus:update}
        \ENDFOR 
    \ENDFOR
    \end{algorithmic}
\end{algorithm}

For any unit $s\neq s_{g,1}$, in any round $t$, define event
\begin{align*}
    A_s(t) = \set{ N_{t,s} \ge \frac{2|s|}{7\Delta_{s,1}^2}\bracket{\log T  + \log \Delta_{s,1}^2 - \log \log T} }
\end{align*}
When $\PP{A_s(t)} \geq \frac{1}{2}$, we have
\begin{align*}
\EE{N_s(T+1)} \ge \EE{N_s(t)} &\ge \EE{N_s(t) \mid  A_s(t)}\cdot \PP{ A_s(t)} \\
& \ge \frac{1}{2}\cdot \frac{2|s|}{7\Delta_{s,1}^2}\bracket{\log T + \log \Delta_{s,1}^2 - \log \log T}\\
& = \Omega\bracket{\frac{\log T}{\Delta_{s,1}^2}}\,,
\end{align*}
when $T$ is sufficiently large. 

If $\PP{A_s(t)} < \frac{1}{2}$ in round $t$, we then prove unit $s$ will be selected by $\greedy$ in the following rounds with a large probability and the expected number of its selections will finally exceed this threshold. 
\begin{align*}
    \PP{s \in S_t} &\ge \PP{s=s_{t,1}} \\
    &\ge \PP{r(\set{s},\theta_t) > r(S_{g,1},\mu), \forall s'\neq s: r(\set{s'},\theta_t) \le r(S_{g,1},\mu) } \\
    &= \PP{r(\set{s},\theta_t) > r(S_{g,1},\mu)}\cdot \PP{\forall s'\neq s: r(\set{s'},\theta_t) \le r(S_{g,1},\mu) }
\end{align*}
where the last equality holds since different units have no common base arms and thus the events on their rewards are independent of that on others. 

For the first part, we have 
\begin{align}
    &\PP{r(\set{s},\theta_t)>r(S_{g,1},\mu)} \notag\\
        =&\PP{ r(\set{s},\theta_t) - r(\set{s},\mu) > r(S_{g,1},\mu) - r(\set{s},\mu) } \notag\\
        =&\PP{r(\set{s},\theta_t) - r(\set{s},\mu)>\Delta_{s,1}} \notag \\
        =&\PP{ \sum_{i \in s}\bracket{ \theta_{t,i} - \mu_i } > \Delta_{s,1} } \notag\\ 
        \ge & \PP{ \sum_{i \in s}\bracket{ \theta_{t,i} - \mu_i } > \Delta_{s,1}  \mid \neg A_s(t) } \PP{\neg A_s(t)}\notag \\
        \ge& \frac{1}{2}\PP{ \sum_{i \in s}\bracket{ \theta_{t,i} - \mu_i }\sqrt{ \frac{N_{t,s}}{|s|} } > \Delta_{s,1}\sqrt{ \frac{N_{t,s}}{|s|} } \mid \neg A_s(t) } \notag \\
        >&\frac{1}{16\sqrt{\pi}}\exp \bracket{- \frac{7\Delta_{s,1}^2}{2 |s|}\times \bracket{ \frac{2|s|}{7\Delta_{s,1}^2}\bracket{\log T + \log \Delta_{s,1}^2 - \log \log T} } } \notag\\ 
        =&\frac{1}{16\sqrt{\pi}}\frac{\log T}{T\cdot \Delta_{s,1}^2} \,,\notag
\end{align}
where the third equality is due to the reward definition in this specific problem instance and the last inequality is due to the Lemma \ref{fact:gaussian:concen} and the event $\neg A_s(t)$.  

For the second part, we have
\begin{align*}
    &\PP{\forall s'\neq s: r(\set{s'},\theta_t) \le r(S_{g,1},\mu) } \notag \\
    =& \PP{r(S_{g,1},\theta_t) \le r(S_{g,1},\mu) }\cdot \PP{\forall s'\notin\set{s,s_{g,1}} : r(\set{s'},\theta_t) \le r(S_{g,1},\mu) } \notag\\
    =& \PP{ \sum_{i \in s_{g,1}}\theta_{t,i} \le \sum_{i \in s_{g,1}}\mu_{i} } \cdot \PP{\forall s'\notin\set{s,s_{g,1}} : r(\set{s'},\theta_t) \le r(S_{g,1},\mu) } \\
    \ge& \PP{ \forall i \in s_{g,1}: \theta_{t,i} \le \mu_{i} }\cdot \PP{\forall s'\notin\set{s,s_{g,1}} : r(\set{s'},\theta_t) \le r(S_{g,1},\mu) } \\
    \ge& \bracket{\frac{1}{2}}^{|s_{g,1}|}\PP{\forall s'\notin\set{s,s_{g,1}} : r(\set{s'},\theta_t) \le r(S_{g,1},\mu) }\\
    =& \bracket{\frac{1}{2}}^{|s_{g,1}|} \prod_{ s'\notin\set{s,s_{g,1}}} \PP{r(\set{s'},\theta_t) \le r(S_{g,1},\mu) } \,,
\end{align*}
where the first and last equality is again due to the independence over different units, the last inequality comes from the result of Lemma \ref{fact:gaussian:concen} and the independence over base arms. 
For each term in the last formula, there is 
\begin{align*}
     &\PP{ r(\set{s'},\theta_t)\le r(S_{g,1},\mu) } \\
        =&\PP{ r(\set{s'},\theta_t) - r(\set{s'},\mu)\leq r(S_{g,1},\mu)-r(\set{s'},\mu) } \\
        =&\PP{ r(\set{s'},\theta_t)-r(\set{s'},\mu)\leq \Delta_{s',1} }\\
        =&\PP{ \sum_{i \in s'} (\theta_{t,i}-\mu_{i}) \leq \Delta_{s',1} }\\
        =&\PP{ \sum_{i \in s'} (\theta_{t,i}-\mu_{i})\sqrt{\frac{N_{t,s'}}{|s'|}}\leq \Delta_{s',1}\sqrt{\frac{N_{t,s'}}{|s'|}} } \\
        \geq& 1-\frac{1}{4}\exp \bracket{ -\frac{\Delta_{s',1}^2 N_{t,s'}}{2|s'|} }\\
        \geq & 1 - \frac{1}{4} \exp \bracket{ -\frac{\Delta_{s',1}^2}{2|s'|} } \,,
\end{align*}
where the first inequality comes from Lemma \ref{fact:gaussian:concen} and the last one is because $N_{t,s}\ge 1$ after the initialization phase for any $s$. Thus we have
\begin{align*}
    \PP{\forall s'\neq s: r(\set{s'},\theta(t)) \le r(S_{g,1},\mu) } \ge \bracket{\frac{1}{2}}^{|s_{g,1}|} \prod_{ s'\notin\set{s,s_{g,1}}} \bracket{ 1-\frac{1}{4} \exp \bracket{ -\frac{\Delta_{s',1}^2}{2|s'|} } } \triangleq C'' \,,
\end{align*}
here $C''$ can be regarded as a constant. Above all, we have
\begin{align*}
    \PP{s \in S_t} &\ge C''\frac{1}{16\sqrt{\pi}}\frac{\log T}{T\cdot \Delta_{s,1}^2} \triangleq p \,,
\end{align*}
the total expected number of rounds when unit $s$ is selected in $S_t$ is at least
\begin{align*}
    \EE{N_{T+1,s}} \ge Tp = \Omega\bracket{\frac{\log T}{\Delta_{s,1}^2}} \,.
\end{align*}
Thus we have proved \eqref{eq:lower:pulltime} in Theorem \ref{thm:lower bound}.

And in the above CMAB instance, we can find that 
\begin{align*}
    \Delta_{u_3}^{\min}&=0.52\Delta+0.008 = 0.52\Delta_{u_3,1}+0.008 \,,\\
\end{align*}
Above all,
\begin{align*}
    R_g(T) = \EE{\sum_{t=1}^T \Delta_{S_t}}  &\ge \EE{\sum_{t=1}^T \bOne{u_3 \in S_t } \Delta_{S_t}} \\
    &\ge \EE{N_{T+1,u_3}}\Delta_{u_3,\min}  \\
    &= \Omega\bracket{\frac{\log T}{\Delta_{u_3,1}^2}\cdot \bracket{ 0.52\Delta_{u_3,1}+0.008 } } \\
    &= \Omega\bracket{\frac{\log T}{\Delta_{u_3,1}^2} }\,.
\end{align*}
We then complete the proof of \eqref{eq:lower:regret} in Theorem \ref{thm:lower bound}. 




\section{Proof of Theorem \ref{thm:upper}}\label{sec:proof:upper}

For any unit $s \neq s_{g,1}$, we define the exploration price $L(s)$ as 
\begin{align*}
    L(s) = \max_{k:s \notin S_{g,k}}\frac{6B^2\abs{s}^2\log T}{\bracket{\Delta_{s,k}-2B\abs{\cup S_g}\varepsilon  }^2}\,.
\end{align*}

In each round $t$, define the following event
\begin{align*}
    B(t)= \set{\exists i \in [m]: \abs{\theta_{t,i}-\hat{\mu}_{t,i}}>\sqrt{\frac{3\log T}{2N_{t,i}}}} \,, \
    C(t)= \set{\exists i \in [m]: \abs{\hat{\mu}_{t,i}-\mu_i}>\sqrt{\frac{3\log T}{2N_{t,i}}}} \,.
\end{align*}
The greedy regret then can be decomposed as
\begin{align}
    R_g(T) 
    &\le \EE{\sum_{t=1}^T \bOne{\neg B(t),\neg C(t)}\Delta_{S_t}}+ \EE{\sum_{t=1}^T \bOne{B(t)}\Delta_{S_t}} + \EE{\sum_{t=1}^T \bOne{C(t)}\Delta_{S_t}} \label{eq:upper:decompose} \,.
\end{align}
We then bound these three terms in \eqref{eq:upper:decompose} one by one.

\textbf{The first term in \eqref{eq:upper:decompose}}:

Recall $\Delta_{S_t} = \max\set{r(S_g,\mu)-r(S_t,\mu),0}$. If $s_{t,k}=s_{g,k}$ for any $k\in[K]$, we must have $\Delta_{S_t}=0$. Thus to bound this term, we analyze the relationship between $s_{t,k}$ and $s_{g,k}$ sequentially for $k=1,2,\ldots,K$. According to this idea, this term can be bounded by
\begin{align*}
    &\EE{\sum_{t=1}^T \bOne{\neg B(t),\neg C(t)}\Delta_{S_t}}\\
    \le&  \sum_{k\in[K]}\EE{\sum_{t=1}^T \bOne{\neg B(t),\neg C(t), S_{t,k-1} = S_{g,k-1}, \norm{\theta_{t,S_{g,k-1}}-\mu_{S_{g,k-1}}}_\infty\le\varepsilon, s_{t,k} \neq s_{g,k}} \Delta_{S_t}}  \\
    &  + \sum_{k\in[K]} \EE{\sum_{t=1}^T \bOne{s_{t,k} = s_{g,k}, \norm{\theta_{t,s_{g,k}}-\mu_{s_{g,k}}}_\infty>\varepsilon}} \cdot \Delta_{\max} \,.
\end{align*}

According to Lemma \ref{lem:key}, the first term can be bounded by
\begin{align*}
    &\sum_{k\in[K]}\EE{\sum_{t=1}^T \bOne{\neg B(t),\neg C(t), S_{t,k-1} = S_{g,k-1}, \norm{\theta_{t,S_{g,k-1}}-\mu_{S_{g,k-1}}}_\infty\le\varepsilon, s_{t,k} \neq s_{g,k}}\Delta_{S_t}} \\
    \le &\sum_{k\in[K]}\EE{\sum_{s\notin S_{g,k}}\sum_{t=1}^T \bOne{s=s_{t,k},N_{t,s}\le L(s)} \Delta_{S_t}} + \sum_{k\in[K]}\frac{C}{\varepsilon^2}\bracket{\frac{C'}{\varepsilon^4}}^{|s_{g,k}|}\Delta_{\max} \\
    \le & \EE{  \sum_{s \neq s_{g,1}}\sum_{t=1}^T\sum_{k\in [K]} \bOne{s=s_{t,k},N_{t,s}\le L(s) } \Delta_{S_t}} + \sum_{k\in [K]}\frac{C}{\varepsilon^2}\bracket{\frac{C'}{\varepsilon^4}}^{\abs{s_{g,k}}}\Delta_{\max} \\
    \le & \EE{ \sum_{s \neq s_{g,1}}\sum_{t=1}^T \bOne{s\in S_t,N_{t,s}\le L(s)}\Delta_{S_t} } + \sum_{k\in [K]}\frac{C}{\varepsilon^2}\bracket{\frac{C'}{\varepsilon^4}}^{\abs{s_{g,k}}}\Delta_{\max} \\
    \le & \sum_{s \neq s_{g,1}} L(s)\Delta_s^{\max} + \sum_{k\in [K]}\frac{C}{\varepsilon^2}\bracket{\frac{C'}{\varepsilon^4}}^{\abs{s_{g,k}}}\Delta_{\max}\\
    \le & \sum_{s \neq s_{g,1} } \max_{k:s \notin S_{g,k}}\frac{6B^2\abs{s}^2 \Delta_s^{\max} \log T}{\bracket{\Delta_{s,k}-2B\abs{\cup S_g}\varepsilon}^2} + \sum_{k\in [K]}\frac{C}{\varepsilon^2}\bracket{\frac{C'}{\varepsilon^4}}^{\abs{s_{g,k}}}\Delta_{\max} \,,
\end{align*}
where $C,C'$ are two universal constants. 
According to Lemma \ref{lem:key:badevent}, the second term can be bounded by
\begin{align*}
    \sum_{k\in[K]} \EE{\sum_{t=1}^T \bOne{s_{t,k} = s_{g,k}, \norm{\theta_{t,s_{g,k}}-\mu_{s_{g,k}}}_\infty>\varepsilon}} \Delta_{\max}
    \le&  \sum_{k\in[K]}\abs{s_{g,k}} \bracket{2+\frac{8}{\varepsilon^2}}\Delta_{\max} \\
     =& \abs{\cup S_g}\bracket{2+\frac{8}{\varepsilon^2}}\Delta_{\max}
\end{align*}
Above all, for the first term in \eqref{eq:upper:decompose}, we have
\begin{align*}
    &\EE{\sum_{t=1}^T \bOne{\neg B(t),\neg C(t)}\Delta_{S_t}}\\
    \le& \sum_{s \neq s_{g,1} } \max_{k:s \notin S_{g,k}}\frac{6B^2\abs{s}^2\Delta_s^{\max}\log T}{\bracket{\Delta_{s,k}-2B\abs{\cup S_g} \varepsilon }^2} + \bracket{\sum_{k\in [K]}\frac{C}{\varepsilon^2}\bracket{\frac{C'}{\varepsilon^4}}^{\abs{s_{g,k}}} + \abs{\cup S_g} \bracket{2+\frac{8}{\varepsilon^2}}}\Delta_{\max}\,.
\end{align*}

\textbf{The second term in \eqref{eq:upper:decompose}}:
\begin{align}
    &\EE{\sum_{t = 1}^T \bOne{B(t)}\Delta_{S_t}} \notag \\
    \le & \EE{\sum_{t = 1}^T \bOne{\exists i \in [m]: \abs{\theta_{t,i}-\hat{\mu}_{t,i}}>\sqrt{\frac{3\log T}{2N_{t,i}}}}} \cdot \Delta_{\max} \notag\\
    \le & \sum_{i \in [m]}\EE{\sum_{t = 1}^T \bOne{\abs{\theta_{t,i}-\hat{\mu}_{t,i}}>\sqrt{\frac{3\log T}{2N_{t,i}}}}} \cdot \Delta_{\max} \notag\\
    = & \sum_{i \in [m]}\sum_{t = 1}^T \sum_{w = 1}^{T-1} \PP{N_{t,i}=w, \abs{\theta_{t,i}-\hat{\mu}_{t,i}}>\sqrt{\frac{3\log T}{2N_{t,i}}} }\cdot \Delta_{\max} \notag\\ 
    = & \sum_{i \in [m]}\sum_{t = 1}^T \sum_{w = 1}^{T-1}\PP{N_{t,i}=w} \cdot \PP{\abs{\theta_{t,i}-\hat{\mu}_{t,i}}>\sqrt{\frac{3\log T}{2N_{t,i}}}\mid N_{t,i}=w }\cdot \Delta_{\max} \notag  \\
    \le & \sum_{i \in [m]}\sum_{t = 1}^T \sum_{w = 1}^{T-1}\PP{N_{t,i}=w} \cdot 2\exp\bracket{-3\log T} \cdot \Delta_{\max} \label{eq:upper:B:chernoff}\\
    \le & \sum_{i \in [m]}\sum_{t = 1}^T \frac{2}{T} \cdot \Delta_{\max} \notag \\
    = & 2m \Delta_{\max} \notag \,,
\end{align}
where \eqref{eq:upper:B:chernoff} comes from the result of Lemma \ref{fact:concen:theta}.

\textbf{The third term in \eqref{eq:upper:decompose}}:
\begin{align}
    &\EE{\sum_{t = 1}^T \bOne{C(t)}\Delta_{S_t}} \notag \\
    \le & \EE{\sum_{t = 1}^T \bOne{\exists i \in [m]: \abs{\hat{\mu}_{t,i}-\mu_i}>\sqrt{\frac{3\log T}{2N_{t,i}}}}} \cdot \Delta_{\max} \notag\\
    \le & \sum_{i \in [m]}\EE{\sum_{t = 1}^T \bOne{\abs{\hat{\mu}_{t,i}-\mu_i}>\sqrt{\frac{3\log T}{2N_{t,i}}}}} \cdot \Delta_{\max} \notag\\
    = & \sum_{i \in [m]}\sum_{t = 1}^T \sum_{w = 1}^{T-1} \PP{N_{t,i}=w, \abs{\hat{\mu}_{t,i}-\mu_i}>\sqrt{\frac{3\log T}{2N_{t,i}}} }\cdot \Delta_{\max} \notag\\ 
    = & \sum_{i \in [m]}\sum_{t = 1}^T \sum_{w = 1}^{T-1}\PP{N_{t,i}=w} \cdot \PP{\abs{\hat{\mu}_{t,i}-\mu_i}>\sqrt{\frac{3\log T}{2N_{t,i}}}\mid N_{t,i}=w }\cdot \Delta_{\max} \notag  \\
    \le & \sum_{i \in [m]}\sum_{t = 1}^T \sum_{w = 1}^{T-1}\PP{N_{t,i}=w} \cdot 2\exp\bracket{-3\log T} \cdot \Delta_{\max} \label{eq:upper:C:chernoff}\\
    \le & \sum_{i \in [m]}\sum_{t = 1}^T \frac{2}{T} \cdot \Delta_{\max} \notag \\
    = & 2m \Delta_{\max} \notag \,,
\end{align}
where \eqref{eq:upper:C:chernoff} is obtained by Lemma \ref{fact:chernoff}.

Combine these three terms in \eqref{eq:upper:decompose}, we can get the result in Theorem \ref{thm:upper}, 
\begin{align*}
    R_g(T) 
    \le &\sum_{s \neq s_{g,1} } \max_{k:s \notin S_{g,k}}\frac{6B^2\abs{s}^2\Delta_s^{\max}\log T}{\bracket{\Delta_{s,k}-2B\abs{\cup S_g} \varepsilon }^2} +\sum_{k\in [K]}\frac{C}{\varepsilon^2}\bracket{\frac{C'}{\varepsilon^4}}^{\abs{s_{g,k}}}\Delta_{\max} \\
    &+ \bracket{  \abs{\cup S_g} \bracket{2+\frac{8}{\varepsilon^2}}+4m}\Delta_{\max} \,.
\end{align*}

\subsection{Technical Lemmas}

\begin{lemma}\label{lem:key}
    In Algorithm \ref{alg:CTS}, for any $k\in [K]$, we have
    \begin{align*}
        &\EE{\sum_{t=1}^T \bOne{\neg B(t),\neg C(t), S_{t,k-1} = S_{g,k-1}, \norm{\theta_{t,S_{g,k-1}}-\mu_{S_{g,k-1}}}_\infty\le\varepsilon, s_{t,k} \neq s_{g,k}}\Delta_{S_t}} \\
        \le &\EE{\sum_{s\notin S_{g,k}}\sum_{t=1}^T \bOne{s=s_{t,k},N_{t,s}\le L(s)} \Delta_{S_t}} + \frac{C}{\varepsilon^2}\bracket{\frac{C'}{\varepsilon^4}}^{|s_{g,k}|}\Delta_{\max} \,,
    \end{align*}
    where $C,C'$ are two universal constants. 
\end{lemma}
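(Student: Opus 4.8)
# Proof Proposal for Lemma \ref{lem:key}

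The plan is to bound the event of misselecting $s_{g,k}$ at step $k$ (conditioned on correct selections in the first $k-1$ steps and a good estimate of the previously selected units) by separating it into two regimes: either some competing unit $s \notin S_{g,k}$ is selected while still under-explored (i.e.\ $N_{t,s}\le L(s)$), or $s_{g,k}$ itself is so poorly estimated that it loses the comparison despite all competitors being well-explored. The first regime contributes the main $L(s)$ term on the right-hand side, while the second regime should contribute only the constant $\tfrac{C}{\varepsilon^2}\bigl(\tfrac{C'}{\varepsilon^4}\bigr)^{|s_{g,k}|}\Delta_{\max}$ term.

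First I would fix the step $k$ and work on the indicator event $\set{\neg B(t),\neg C(t), S_{t,k-1}=S_{g,k-1}, \norm{\theta_{t,S_{g,k-1}}-\mu_{S_{g,k-1}}}_\infty\le\varepsilon, s_{t,k}\neq s_{g,k}}$. Because the greedy oracle picks $s_{t,k}=\argmax_{s\notin S_{t,k-1}} r(S_{t,k-1}\cup\set{s},\theta_t)$ and we are on $S_{t,k-1}=S_{g,k-1}$, a misselection means the chosen unit $s_{t,k}\neq s_{g,k}$ satisfies $r(S_{g,k-1}\cup\set{s_{t,k}},\theta_t)\ge r(S_{g,k},\theta_t)$. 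I would split the misselection indicator according to whether the selected unit $s=s_{t,k}$ is under-explored. On the event $\set{N_{t,s}\le L(s)}$ we simply absorb the round into the first sum $\EE{\sum_{s\notin S_{g,k}}\sum_t \bOne{s=s_{t,k}, N_{t,s}\le L(s)}\Delta_{S_t}}$, which is exactly the first term of the claimed bound. The remaining task is to show that the complementary event $\set{s_{t,k}=s\neq s_{g,k}, N_{t,s}> L(s)}$ (together with the good-estimate conditions) contributes only the constant term.

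The core of the argument — and the main obstacle — is controlling this complementary event. When $N_{t,s}>L(s)$, the good events $\neg B(t),\neg C(t)$ force $\theta_{t,s}$ to be within roughly $\sqrt{3\log T/(2N_{t,s})}$ of $\mu_s$ on each base arm, so by the Lipschitz condition (Assumption \ref{ass:lip}) and the definition $L(s)=\max_{k:s\notin S_{g,k}} 6B^2|s|^2\log T/(\Delta_{s,k}-2B|\cup S_g|\varepsilon)^2$, the estimated reward $r(S_{g,k-1}\cup\set{s},\theta_t)$ is close enough to its true value that it cannot exceed $r(S_{g,k},\mu)$ by more than a controlled slack. Combining this with $\norm{\theta_{t,S_{g,k-1}}-\mu_{S_{g,k-1}}}_\infty\le\varepsilon$, a misselection can only occur if $r(S_{g,k},\theta_t)$ falls well below $r(S_{g,k},\mu)$, which in turn forces $\theta_{t,s_{g,k}}$ to be anomalously small on unit $s_{g,k}$. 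I would then reduce the occurrence of this anomaly to a statement purely about the posterior concentration of the $|s_{g,k}|$ base arms in $s_{g,k}$, and bound its expected cumulative count using the anti-concentration/tail machinery (of the type in Lemma \ref{fact:concen:theta} and related TS lemmas), yielding the $\tfrac{C}{\varepsilon^2}\bigl(\tfrac{C'}{\varepsilon^4}\bigr)^{|s_{g,k}|}$ factor. The exponent $|s_{g,k}|$ and the $\varepsilon^{-4}$ dependence arise because one must simultaneously control all base arms of $s_{g,k}$, which is where the delicate standard-TS counting argument (bounding how often the optimal unit is selected yet poorly estimated) enters; getting the universal constants $C,C'$ right here is the most technical part and I expect it to hinge on a careful union bound over the base arms of $s_{g,k}$ combined with the geometric-series summation over the number of samples of each such arm.
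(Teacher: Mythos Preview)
Your decomposition into ``$s_{t,k}$ under-explored'' versus ``$s_{t,k}$ well-explored'' is exactly the paper's split via the event $D_k(t)$, and your treatment of the first regime is correct. The gap is in the second regime.

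You correctly deduce that when $N_{t,s_{t,k}}>L(s_{t,k})$ together with $\neg B(t),\neg C(t)$ and $\norm{\theta_{t,S_{g,k-1}}-\mu_{S_{g,k-1}}}_\infty\le\varepsilon$, a misselection forces $\theta_{t,s_{g,k}}$ to be far from $\mu_{s_{g,k}}$. But you then propose to bound the cumulative number of such rounds by ``posterior concentration of the $|s_{g,k}|$ base arms in $s_{g,k}$'' via Lemma~\ref{fact:concen:theta} plus a geometric series over the number of samples. This does not work as stated: in every round contributing to this term we have $s_{t,k}\neq s_{g,k}$, so $s_{g,k}$ is \emph{not} being selected at step $k$ and its posterior need not be updated at all. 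If nothing else forces $s_{g,k}$ to be played, its Beta posterior stays diffuse and the event $\norm{\theta_{t,s_{g,k}}-\mu_{s_{g,k}}}_\infty>\varepsilon$ can recur indefinitely; concentration alone cannot close the argument. Relatedly, your description of the needed step as ``bounding how often the optimal unit is \emph{selected} yet poorly estimated'' is the content of Lemma~\ref{lem:key:badevent}, which handles a different term in the overall regret decomposition and is not what is required here.

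What the paper does to resolve this is introduce a decoupling event $\cE_{k,1}(t)$, measurable without reference to $\theta_{t,s_{g,k}}$, asserting that \emph{any} $\theta'$ agreeing with $\theta_t$ off $s_{g,k}$ and satisfying $\norm{\theta'_{s_{g,k}}-\mu_{s_{g,k}}}_\infty\le\varepsilon$ would cause $\greedy$ to pick $s_{g,k}$ at step $k$. One shows (using your well-explored computation) that the second-regime event implies both $\cE_{k,1}(t)$ and $\cE_{k,2}(t):=\{\norm{\theta_{t,s_{g,k}}-\mu_{s_{g,k}}}_\infty>\varepsilon\}$. Since $\cE_{k,1}(t)$ and $\cE_{k,2}(t)$ are conditionally independent given $\cH_t$, one obtains a time-varying geometric bound: between the $q$-th and $(q{+}1)$-th occurrences of $\cE_{k,1}\wedge\neg\cE_{k,2}$ (at which $s_{g,k}$ \emph{is} selected and hence updated), the expected count of $\cE_{k,1}\wedge\cE_{k,2}$ is at most $\sup_t\prod_{i\in s_{g,k}}1/\PP{|\theta_{t,i}-\mu_i|\le\varepsilon\mid\cH_t}-1$ (Lemma~\ref{lem:upper:optimal:analysis}). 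Summing over $q$ and invoking Lemma~\ref{fact:nips20} then yields the $\tfrac{C}{\varepsilon^2}\bigl(\tfrac{C'}{\varepsilon^4}\bigr)^{|s_{g,k}|}$ bound. This coupling---that whenever the environment is ``ready'' for $s_{g,k}$, a fresh sample has constant probability of actually triggering its selection---is the missing ingredient in your plan.
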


\begin{proof}
    Recall for any unit $s \notin S_{g,k}$, $\Delta_{s,k} = r(S_{g,k},\mu)-r(S_{g,k-1}\cup\set{s},\mu)$. Define the event 
\begin{align*}
    D_k(t)=\set{ B\sum_{i\in s_{t,k}}\abs{\theta_{t,i}-\mu_i}>\Delta_{s_{t,k},k} - B\bracket{2\sum_{k'<k}\abs{s_{g,k'}}+\abs{s_{g,k}}+1}\varepsilon } \,.
\end{align*}
Then the formula in Lemma \ref{lem:key} can further bounded by
\begin{align}
    &\EE{\sum_{t=1}^T \bOne{\neg B(t),\neg C(t), S_{t,k-1} = S_{g,k-1}, \norm{\theta_{t,S_{g,k-1}}-\mu_{S_{g,k-1}}}_\infty\le\varepsilon, s_{t,k} \neq s_{g,k}}\Delta_{S_t}} \notag\\
     \leq &\EE{\sum_{t=1}^T \bOne{\neg B(t),\neg C(t), S_{t,k-1} = S_{g,k-1}, \norm{\theta_{t,S_{g,k-1}}-\mu_{S_{g,k-1}}}_\infty\le\varepsilon, s_{t,k} \neq s_{g,k}, D_k(t)}\Delta_{S_t}}  \label{eq:lem:key:1}\\
     &+ \EE{\sum_{t=1}^T \bOne{ S_{t,k-1} = S_{g,k-1}, \norm{\theta_{t,S_{g,k-1}}-\mu_{S_{g,k-1}}}_\infty\le\varepsilon, s_{t,k} \neq s_{g,k}, \neg D_k(t)} }\Delta_{\max} \label{eq:lem:key:2} \,.
\end{align}
For term \eqref{eq:lem:key:1}, we claim that the event $\set{ \neg B(t),\neg C(t), S_{t,k-1} = S_{g,k-1}, \norm{\theta_{t,S_{g,k-1}}-\mu_{S_{g,k-1}}}_\infty\le\varepsilon, s_{t,k} \neq s_{g,k}, D_k(t)} $ implies $N_{t,s_{t,k}} \leq L(s_{t,k})$. This claim can be proved by contradiction.

Suppose $N_{t,s_{t,k}} > L(s_{t,k})$, then we must have
    \begin{align*}
        B\sum_{i\in s_{t,k}}\abs{\theta_{t,i}-\mu_i}&\leq B\sum_{i\in s_{t,k}}\sqrt{\frac{6\log T}{N_{t,s_{t,k}}}} \\
        &< B\abs{s_{t,k}}\sqrt{\frac{6\log T}{ 6 B^2\abs{s_{t,k}}^2\log T}}\bracket{\Delta_{s_{t,k},k}-2B\abs{\cup S_g}\varepsilon } \\
        &\le B\abs{s_{t,k}}\sqrt{\frac{\log T}{ B^2\abs{s_{t,k}}^2\log T}}\bracket{\Delta_{s_{t,k},k}-B\bracket{2\sum_{k'<k}\abs{s_{g,k'}}+\abs{s_{g,k}}+1}\varepsilon}\notag \\
        &=\Delta_{s_{t,k},k}-B\bracket{2\sum_{k'<k}\abs{s_{g,k'}}+\abs{s_{g,k}}+1}\varepsilon \,,\notag 
    \end{align*}
    where the first inequality is due to the event of $\neg B(t)$ and $\neg C(t)$, the second inequality comes from the fact $N_{t,s_{t,k}} > L(s_{t,k})$ and the definition of $L(s_{t,k})$. Thus we conclude the event $D_k(t)$ will not happen and the claim is proved. 

    Then according to the above claim, there is
    \begin{align*}
        \eqref{eq:lem:key:1} &\le \EE{\sum_{t=1}^T \bOne{N_{t,s_{t,k}}\leq L(s_{t,k}), s_{t,k} \notin S_{g,k}} \Delta_{S_t}} \\
    &\le \EE{\sum_{s \notin S_{g,k}}\sum_{t=1}^T \bOne{s=s_{t,k}, N_{t,s}\leq L(s)}\Delta_{S_t}} \,.
    \end{align*} 

For term \eqref{eq:lem:key:2}, we first define event $\cE_{k,1}(t)$ as
\begin{align*}
    \cE_{k,1}(t) = &\left\{\forall \theta' \text{ with }\theta'_{i}=\theta_{t,i} \text{ for any }i \notin s_{g,k} \text{ and } \norm{\theta'_{s_{g,k}} - \mu_{s_{g,k}}}_\infty \le \varepsilon, \text{ then } s_{g,k} \text{ is the $k$-th} \right. \\
    &\left. \text{selected unit by $\greedy$ when the input is } \theta' \right\} \\
\end{align*}
and the event $\cE_{k,2}(t)$ as 
\begin{align*}
    \cE_{k,2}(t) = \set{ \norm{\theta_{t,s_{g,k}} - \mu_{s_{g,k}}}_\infty > \varepsilon }\,.
\end{align*}

We claim that if the event $\set{S_{t,k-1} = S_{g,k-1}, \norm{\theta_{t,S_{g,k-1}}-\mu_{S_{g,k-1}}}_\infty\le\varepsilon, s_{t,k} \neq s_{g,k}, \neg D_k(t)}$ happens, then $\cE_{k,1}(t)$ and $\cE_{k,2}(t)$ hold. 

We first consider event $\cE_{k,1}(t)$. To show this event holds, it is sufficient to prove for any $\theta'$ satisfies the condition defined in $\cE_{k,1}(t)$, $S_{g,k-1}$ is still the set of units selected by $\greedy$ in the first $k-1$ steps under $\theta'$ and for any $s' \notin S_{g,k}$, $r(S_{g,k-1} \cup \set{s'},\theta')<r(S_{g,k},\theta')$.

We now prove that for any $\theta'$ satisfying the condition defined in $\cE_{k,1}(t)$, $S_{g,k-1}$ is still the set of units selected by $\greedy$ under $\theta'$ in the first $k-1$ steps. The event $S_{t,k-1} =  S_{g,k-1}$ means that for any $k'<k, s \notin S_{g,k'}$, we have $r(S_{g,k'},\theta_t) > r(S_{g,k'-1}\cup\set{s},\theta_t)$. The mean vector $\theta'$ and $\theta_t$ are only different on $s_{g,k}$, thus for any $k'<k, s \notin S_{g,k'}\cup\set{s_{g,k}}$, we still have $r(S_{g,k'},\theta') > r(S_{g,k'-1}\cup\set{s},\theta')$. As for the unit $s_{g,k}$, for any $k'<k$, we have
    \begin{align*}
        r(S_{g,k'},\theta') =& r(S_{g,k'},\theta_t)\\
        \ge& r(S_{g,k'},\mu) - B\abs{\cup S_{g,k'}}\varepsilon \\
        =& r(S_{g,k'-1}\cup\set{s_{g,k}},\mu)+ \Delta_{s_{g,k},k'}- B\abs{\cup S_{g,k'}}\varepsilon \\
        \ge& r(S_{g,k'-1}\cup\set{s_{g,k}},\theta') -B\bracket{\abs{\cup S_{g,k'-1}}+\abs{ s_{g,k}}}\varepsilon  + \Delta_{s_{g,k},k'}- B\abs{\cup S_{g,k'}}\varepsilon \\
        \ge& r(S_{g,k'-1}\cup\set{s_{g,k}},\theta')  + \Delta_{s_{g,k},k'}- 2B\abs{\cup S_{g}}\varepsilon\\
        >& r(S_{g,k'-1}\cup\set{s_{g,k}},\theta') \,.
    \end{align*}
    where the last inequality holds due to the requirement of $\varepsilon$ in Theorem \ref{thm:upper}. 
Above all, we conclude $\forall k'<k,s \notin S_{g,k'}$, $r(S_{g,k'},\theta') > r(S_{g,k'-1}\cup\set{s},\theta')$, thus $S_{g,k-1}$ is still the set of units selected by $\greedy$ in the first $k-1$ steps.

Next we prove for any $\theta'$ defined in $\cE_{k,1}(t)$ and unit $s' \notin S_{g,k}$, $r(S_{g,k-1} \cup \set{s'},\theta')<r(S_{g,k},\theta')$. 
    \begin{align}
        r(S_{g,k-1} \cup \set{s'},\theta')&=r(S_{g,k-1} \cup\set{s'},\theta_t) \notag \\
        &\leq r(S_{t,k},\theta_t) ~~~\text{ ($\greedy$'s property and $S_{t,k-1} = S_{g,k-1}$)} \notag\\
        &\leq r(S_{t,k},\mu)+B\sum_{i\in \cup S_{t,k-1}}\abs{\theta_{t,i}-\mu_i}+B\sum_{i\in s_{t,k}}\abs{\theta_{t,i}-\mu_i} ~~~\text{ (Lipschitz continuity)}  \notag \\
        &\le r(S_{t,k},\mu) + B\sum_{k'<k}\abs{s_{g,k'}}\varepsilon+ B\sum_{i\in s_{t,k}}\abs{\theta_{t,i}-\mu_i}  \notag\\
        &\leq r(S_{t,k},\mu) + B\sum_{k'<k}\abs{s_{g,k'}}\varepsilon+ \Delta_{s_{t,k},k} - B\bracket{2\sum_{k'<k}\abs{s_{g,k'}}+\abs{s_{g,k}}+1}\varepsilon   \label{eq:lemkey:claim2:0} \\
        &\leq r(S_{g,k},\mu) + B\sum_{k'<k}\abs{s_{g,k'}}\varepsilon -B\bracket{2\sum_{k'<k}\abs{s_{g,k'}}+\abs{s_{g,k}}+1}\varepsilon  \label{eq:lemkey:claim2:1} \\
        &= r(S_{g,k},\mu)-B\bracket{\abs{\cup S_{g,k}}+1}\varepsilon \notag\\
        &\le r(S_{g,k},\theta') + B\abs{\cup S_{g,k}}\varepsilon -B\bracket{\abs{\cup S_{g,k}}+1}\varepsilon ~~~\text{ (Lipschitz continuity)} \notag\\
        &< r(S_{g,k},\theta') \notag \,,
    \end{align}
    where the first equality is because $\theta'_i = \theta_{t,i}$ for any $i \in s'$ and $i \in \cup S_{g,k-1}$, the third inequality is because $S_{t,k-1}=S_{g,k-1}$ and $\norm{\theta_{t,S_{g,k-1}}-\mu_{S_{g,k-1}}}_\infty\le\varepsilon$. \eqref{eq:lemkey:claim2:0} comes from the definition of $\neg D_k(t)$ and \eqref{eq:lemkey:claim2:1} is due to the definition of $\Delta_{s_{t,k},k}$ and the fact $S_{t,k-1}=S_{g,k-1}$.

    Above all, we have proved if event $\set{S_{t,k-1} = S_{g,k-1}, \norm{\theta_{t,S_{g,k-1}}-\mu_{S_{g,k-1}}}_\infty\le\varepsilon, s_{t,k} \neq s_{g,k}, \neg D_k(t)}$ happens, then $\cE_{k,1}(t)$ holds. 

    Next we consider $\cE_{k,2}(t)$. By contradiction, when the event $\set{S_{t,k-1} = S_{g,k-1}, \norm{\theta_{t,S_{g,k-1}}-\mu_{S_{g,k-1}}}_\infty\le\varepsilon, s_{t,k} \neq s_{g,k}, \neg D_k(t)}$ happens, if $\neg \cE_{k,2}(t) = \set{ \norm{\theta_{t,s_{g,k}} - \mu_{s_{g,k}}}_\infty \le \varepsilon }$ holds, then $\theta_t$ satisfies the property of $\theta'$ defined in $\cE_{k,1}(t)$. Thus according to $\cE_{k,1}(t)$, $s_{g,k}$ would be the $k$-th selected unit by $\greedy$ when the input is $\theta_t$, or in other words $s_{g,k}=s_{t,k}$. This contradicts $\set{S_{t,k-1} = S_{g,k-1}, \norm{\theta_{S_{g,k-1}}(t)-\mu_{S_{g,k-1}}}_\infty\le\varepsilon, s_{t,k} \neq s_{g,k}, \neg D_k(t)}$. Thus we conclude $\cE_{k,2}(t)$ also holds. 

    Above all, for term \eqref{eq:lem:key:2} we have
\begin{align*}
    \eqref{eq:lem:key:2} &\le \EE{\sum_{t=1}^T \bOne{\cE_{k,1}(t),\cE_{k,2}(t)}}\Delta_{\max} \\
    &\le\sum_{q\ge 0} \EE{\sum_{t=\tau_{k,q}+1}^{\tau_{k,q+1}}\bOne{\cE_{k,1}(t),\cE_{k,2}(t)}} \Delta_{\max}\\
    &\le \bracket{\sum_{q\ge 0} \EE{\sup_{t\ge \tau_{k,q}+1}\prod_{i \in s_{g,k}} \frac{1}{\PP{\abs{\theta_{t,i}-\mu_i}\le \varepsilon \mid \cH_{t}}}  } - 1}\Delta_{\max} \\
    &\le \bracket{\sum_{q=0}^{\lceil8/\varepsilon^2\rceil-1} \bracket{c\varepsilon^{-4}}^{|s_{g,k}|} + \sum_{q \ge \lceil 8/\varepsilon^2\rceil} e^{-\varepsilon^2 q /8}\bracket{ c'\varepsilon^{-4} }^{|s_{g,k}|}}\Delta_{\max}\\
    &\le \frac{C}{\varepsilon^2}\bracket{\frac{C'}{\varepsilon^4}}^{\abs{s_{g,k}}}\Delta_{\max}\,,
\end{align*}
where $\tau_{k,q}$ is the round at which $\cE_{k,1}(t)\land \neg \cE_{k,2}(t)$ occurs for the $q$-th time, note $\tau_{k,0}=0$ for any $k\in[K]$. The third inequality comes from the result in Lemma \ref{lem:upper:optimal:analysis} and the fourth comes from the Lemma \ref{fact:nips20}. Here $C,C'$ are two universal constants. 
\end{proof}

\begin{lemma}\label{lem:upper:optimal:analysis}
    Let $\tau_{k,q}$ be the round at which  $\cE_{k,1}(t)\land \neg \cE_{k,2}(t)$ occurs for the $q$-th time, let $\tau_{k,0}=0$ for any $k\in[K]$. Then for Algorithm \ref{alg:CTS}, we have
    \begin{align*}
        \EE{\sum_{t=\tau_{k,q}+1}^{\tau_{k,q+1}}\bOne{\cE_{k,1}(t),\cE_{k,2}(t)}} \le \EE{\sup_{t\ge \tau_{k,q}+1}\prod_{i \in s_{g,k}} \frac{1}{\PP{\abs{\theta_{t,i}-\mu_i}\le \varepsilon \mid \cH_{t}}}  } - 1 \,.
    \end{align*} 
\end{lemma}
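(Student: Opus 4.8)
The plan is to reduce the left-hand side to a geometric-tail computation. First I would record the structural fact that $\cE_{k,1}(t)$ is a function only of $\cH_t$ and the sampled coordinates $\set{\theta_{t,i}:i\notin s_{g,k}}$, whereas $\neg\cE_{k,2}(t)=\set{\norm{\theta_{t,s_{g,k}}-\mu_{s_{g,k}}}_\infty\le\varepsilon}$ is a function only of $\cH_t$ and $\set{\theta_{t,i}:i\in s_{g,k}}$. Since in Algorithm \ref{alg:CTS} the coordinates of $\theta_t$ are drawn independently given $\cH_t$, the events $\cE_{k,1}(t)$ and $\cE_{k,2}(t)$ are conditionally independent given $\cH_t$, and moreover
\[
p_t:=\PP{\neg\cE_{k,2}(t)\mid\cH_t}=\prod_{i\in s_{g,k}}\PP{\abs{\theta_{t,i}-\mu_i}\le\varepsilon\mid\cH_t},
\]
which is exactly the reciprocal of the product appearing on the right-hand side. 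I would then call a round $t$ in the window $\set{\tau_{k,q}+1,\ldots,\tau_{k,q+1}}$ a \emph{trial} if $\cE_{k,1}(t)$ holds; within a trial, $\neg\cE_{k,2}(t)$ is a \emph{success} and $\cE_{k,2}(t)$ a \emph{failure}. Since a success is exactly $\cE_{k,1}(t)\land\neg\cE_{k,2}(t)$, the window terminates at its first successful trial, and $N:=\sum_{t=\tau_{k,q}+1}^{\tau_{k,q+1}}\bOne{\cE_{k,1}(t),\cE_{k,2}(t)}$ counts precisely the failed trials preceding it.

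\textbf{Geometric-tail computation.} Enumerate the trial rounds $t_1<t_2<\cdots$ starting from $\tau_{k,q}+1$ and let $J$ be the index of the first success, so that $N=J-1=\sum_{j\ge1}\bOne{\text{trials }1,\ldots,j\text{ all fail}}$. Conditioned on $\cH_{t_l}$ and on round $t_l$ being a trial, the conditional independence above gives $\PP{\cE_{k,2}(t_l)\mid\cH_{t_l},\cE_{k,1}(t_l)}=\PP{\cE_{k,2}(t_l)\mid\cH_{t_l}}=1-p_{t_l}$, with $p_{t_l}$ being $\cH_{t_l}$-measurable. Applying the tower property along the stopping times $t_1<t_2<\cdots$ (iterating this one-step failure probability) yields $\PP{\text{trials }1,\ldots,j\text{ all fail}}=\EE{\prod_{l=1}^j(1-p_{t_l})}$, and hence, by Tonelli,
\[
\EE{N}=\sum_{j\ge1}\EE{\prod_{l=1}^j\bracket{1-p_{t_l}}}=\EE{\sum_{j\ge1}\prod_{l=1}^j\bracket{1-p_{t_l}}}.
\]

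\textbf{Replacing by the worst success probability.} Set $p_*:=\inf_{t\ge\tau_{k,q}+1}p_t$. Since each trial time satisfies $t_l\ge\tau_{k,q}+1$, we have $1-p_{t_l}\le1-p_*$ pathwise, so $\sum_{j\ge1}\prod_{l=1}^j(1-p_{t_l})\le\sum_{j\ge1}(1-p_*)^j=\tfrac{1}{p_*}-1$. Taking expectations gives
\[
\EE{N}\le\EE{\frac{1}{p_*}-1}=\EE{\sup_{t\ge\tau_{k,q}+1}\prod_{i\in s_{g,k}}\frac{1}{\PP{\abs{\theta_{t,i}-\mu_i}\le\varepsilon\mid\cH_t}}}-1,
\]
which is the claimed bound.

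\textbf{Main obstacle.} The conceptual crux, and the reason the statement carries a supremum rather than a single factor $1/p_{\tau_{k,q}+1}$, is that the posterior of the arms in $s_{g,k}$ need not be frozen across the window: in failed and in non-trial rounds the unit $s_{g,k}$ may still be selected and therefore observed, so $p_t$ can vary. The iterated-conditioning identity stays exact with the time-varying $p_{t_l}$, and all the slack is paid in the single uniform replacement $p_{t_l}\ge p_*$. The one technical point I would be careful about is making the tower-property step rigorous: I would enlarge the filtration to $\mathcal{G}_t=\sigma(\cH_t,\set{\theta_{t,i}:i\notin s_{g,k}})$, so that trial membership is $\mathcal{G}_t$-measurable, the $t_l$ are stopping times for $\set{\mathcal{G}_t}$, and the inside-samples $\set{\theta_{t_l,i}:i\in s_{g,k}}$ remain distributed according to the posterior given $\mathcal{G}_{t_l}$, keeping $p_{t_l}$ measurable throughout.
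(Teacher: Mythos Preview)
Your proposal is correct and follows essentially the same approach as the paper: both reduce the left-hand side to the expectation of a time-varying geometric random variable (the paper via the pmf representation, you via the tail-sum $\sum_{j\ge1}\PP{J>j}$) and then bound it by replacing each success probability $p_{t_l}$ with its infimum $p_*$ over the window. One minor adjustment: your enlarged filtration $\mathcal{G}_t$ should also contain the full past samples $\{\theta_\tau:\tau<t\}$ so that the earlier trial outcomes and the stopping times $t_l$ are genuinely $\mathcal{G}_t$-measurable, but this does not affect the argument.
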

\begin{proof}
    Conditioned on history $\cH_{t}$, the event $\cE_{k,1}(t)$ and $\cE_{k,2}(t)$ are independent. Thus 
    \begin{align*}
        \EE{\sum_{t=\tau_{k,q}+1}^{\tau_{k,q+1}}\bOne{\cE_{k,1}(t),\cE_{k,2}(t)}}  = \EE{\sum_{q'\ge 1}(q'-1)\PP{\neg \cE_{k,2}(\tau_{k,q,q'}) \mid \cH_{\tau_{k,q,q'}}} \prod_{j=1}^{q'-1}\PP{ \cE_{k,2}(\tau_{k,q,j}) \mid \cH_{\tau_{k,q,j}} }}\,,
    \end{align*}
    here $\tau_{k,q,j}$ is the round when the event $\cE_{k,1}(t)$ has happened for the $j$-th time after round $\tau_{k,q}+1$. 

    The right hand side of the above equality is the expectation of a time-varying geometric distribution with the success probability of the $j$-th trial being $\PP{\neg \cE_{k,2}(\tau_{k,q,j}) \mid \cH_{\tau_{k,q,j}}  }$. We can lower bound this probability by
    \begin{align*}
        \inf_{t \ge \tau_{k,q}+1} \PP{\neg \cE_{k,2}(t) \mid \cH_{t} } = \inf_{t \ge \tau_{k,q}+1} \prod_{i \in s_{g,k}} \PP{\abs{ \theta_{t,i}-\mu_i }\le \varepsilon \mid \cH_{t} }\,.
    \end{align*}
    Then according to the monotonicity of the expectation, we can upper bound the above expectation by
    \begin{align*}
        &\EE{\sum_{q'\ge 1}(q'-1)\PP{\neg \cE_{k,2}(\tau_{k,q,q'}) \mid \cH_{\tau_{k,q,q'}}} \prod_{j=1}^{q'-1}\PP{ \cE_{k,2}(\tau_{k,q,j}) \mid \cH_{\tau_{k,q,j}} }}\\
         \le &\EE{\sup_{t \ge \tau_{k,q}+1} \frac{1}{\prod_{i \in s_{g,k}} \PP{\abs{ \theta_{t,i}-\mu_i }\le \varepsilon \mid \cH_{t} }} }   -1 \,.
    \end{align*}

\end{proof}

\begin{lemma}\label{lem:key:badevent}
    In Algorithm \ref{alg:CTS}, for any $k\in [K]$, we have
    \begin{align*}
        \EE{\sum_{t=1}^T \bOne{s_{t,k} = s_{g,k}, \norm{\theta_{t,s_{g,k}}-\mu_{s_{g,k}}}_\infty>\varepsilon}}
        \le  \abs{s_{g,k}} \bracket{2+\frac{8}{\varepsilon^2}} \,. 
    \end{align*}
\end{lemma}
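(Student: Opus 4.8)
The plan is to prove the bound one base arm at a time. Since $\norm{\theta_{t,s_{g,k}}-\mu_{s_{g,k}}}_\infty>\varepsilon$ means $\abs{\theta_{t,i}-\mu_i}>\varepsilon$ for at least one $i\in s_{g,k}$, a union bound over the base arms of the unit gives
\begin{align*}
\EE{\sum_{t=1}^T \bOne{s_{t,k} = s_{g,k}, \norm{\theta_{t,s_{g,k}}-\mu_{s_{g,k}}}_\infty>\varepsilon}} \le \sum_{i\in s_{g,k}} \EE{\sum_{t=1}^T \bOne{s_{t,k}=s_{g,k}, \abs{\theta_{t,i}-\mu_i}>\varepsilon}}\,,
\end{align*}
so it suffices to show each inner term is at most $2+8/\varepsilon^2$, which then sums to $\abs{s_{g,k}}\bracket{2+8/\varepsilon^2}$ as claimed.

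First I would exploit the observation structure. Since $i\in s_{g,k}$ and each base arm lies in a unique unit, the event $s_{t,k}=s_{g,k}$ forces $s_{g,k}\in S_t$ and hence an observation of $i$ in round $t$; in particular $N_{t,i}$ strictly increases across the rounds counted by the inner sum. Reindexing those rounds by the observation count, and writing $\theta^{(j)}_i$ for the sample drawn at the round in which $i$ is observed for the $(j+1)$-th time (so the posterior then reflects $j$ previous observations), I obtain
\begin{align*}
\sum_{t=1}^T \bOne{s_{t,k}=s_{g,k}, \abs{\theta_{t,i}-\mu_i}>\varepsilon} \le \sum_{j\ge 0} \bOne{\abs{\theta^{(j)}_i-\mu_i}>\varepsilon}\,.
\end{align*}
Taking expectations and conditioning on the history just before each such round, the posterior for $i$ is $\mathrm{Beta}(1+S_j,1+j-S_j)$; because the Bernoulli-ized outcomes $Y_{\tau,i}$ are i.i.d.\ $\mathrm{Bernoulli}(\mu_i)$ and independent of the predictable decision of when to observe $i$, the count satisfies $S_j\sim\mathrm{Binomial}(j,\mu_i)$. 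Hence the inner term is at most $\sum_{j\ge 0}\PP{\abs{\theta^{(j)}_i-\mu_i}>\varepsilon}$, where the probability is over both the $\mathrm{Binomial}$ draw of $S_j$ and the $\mathrm{Beta}$ sample.

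The remaining work is a two-layer tail bound on $\PP{\abs{\theta^{(j)}_i-\mu_i}>\varepsilon}$. I would split the deviation of the sample from the truth into the deviation of the Beta sample from the empirical mean and the deviation of the empirical mean from $\mu_i$, controlling the former by the $\mathrm{Beta}$ concentration underlying Lemma \ref{fact:concen:theta} and the latter by the Chernoff bound of Lemma \ref{fact:chernoff}, so that $\PP{\abs{\theta^{(j)}_i-\mu_i}>\varepsilon}\le c\exp(-c'j\varepsilon^2)$ once $j$ is large. Summing, I would bound the first $O(1/\varepsilon^2)$ terms trivially by $1$ and the geometric tail by a constant, which is where the $8/\varepsilon^2$ and the additive $2$ come from. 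The main obstacle will be this two-layer concentration: I must pass cleanly from the randomness of the empirical count $S_j$ (which itself concentrates only for $j\gtrsim 1/\varepsilon^2$) to a single exponential tail in $j$ that is summable, and choose the split point so the constants collapse to exactly $2+8/\varepsilon^2$. The reindexing also needs the (standard) justification that adaptively chosen observation times do not disturb the i.i.d.\ $\mathrm{Bernoulli}(\mu_i)$ law of the outcomes.
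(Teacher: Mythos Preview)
Your proposal is correct and follows essentially the same approach as the paper: union bound over the base arms of $s_{g,k}$, reindex by the observation count (using that $s_{t,k}=s_{g,k}$ forces an observation of $i$), split $\abs{\theta_{t,i}-\mu_i}>\varepsilon$ into the two layers $\abs{\theta_{t,i}-\hat\mu_{t,i}}>\varepsilon/2$ and $\abs{\hat\mu_{t,i}-\mu_i}>\varepsilon/2$, and then sum the resulting exponential tails from Lemma~\ref{fact:concen:theta} and Lemma~\ref{fact:chernoff}. The only cosmetic difference is that the paper performs the $\varepsilon/2$ split \emph{before} the union bound and then, for each piece, bounds only the $j=0$ term trivially by $1$ and sums the full geometric tail $2\sum_{w\ge 1}e^{-w\varepsilon^2/2}\le 4/\varepsilon^2$; this yields exactly $1+4/\varepsilon^2$ per piece and hence $2+8/\varepsilon^2$ without any need to cordon off an initial block of $O(1/\varepsilon^2)$ terms as you suggest.
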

\begin{proof}
We first decompose the event as
\begin{align}
    &\EE{\sum_{t=1}^T \bOne{s_{t,k} = s_{g,k}, \norm{\theta_{t,s_{g,k}}-\mu_{s_{g,k}}}_\infty>\varepsilon}} \notag \\
    \le & \EE{\sum_{t=1}^T \bOne{s_{t,k} = s_{g,k}, \norm{\hat{\mu}_{t,s_{g,k}}-{\mu}_{s_{g,k}}}_\infty>\frac{\varepsilon}{2}}} \label{eq:lem:keybad:1}\\
     &+  \EE{\sum_{t=1}^T \bOne{s_{t,k} = s_{g,k}, \norm{\theta_{t,s_{g,k}}-\hat{\mu}_{t,s_{g,k}}}_\infty>\frac{\varepsilon}{2}}}\,. \label{eq:lem:keybad:2}
\end{align}
For term \eqref{eq:lem:keybad:1}, we have
\begin{align}
    \eqref{eq:lem:keybad:1} &= \EE{\sum_{t=1}^T \bOne{s_{t,k} = s_{g,k}, \exists i \in s_{g,k}: \abs{\hat{\mu}_{t,i}-{\mu}_{i}} >\frac{\varepsilon}{2} }} \notag\\
     &\le \sum_{i \in {s_{g,k}}} \EE{\sum_{t=1}^T \bOne{ i \in \cup S_t: \abs{\hat{\mu}_{t,i}-{\mu}_{i}} >\frac{\varepsilon}{2} }} \notag\\
     &= \sum_{i \in {s_{g,k}}} \EE{  \sum_{w=0}^T \EE{ \sum_{t\in[T]:N_{t,i}=w}  \bOne{ i \in \cup S_t: \abs{\hat{\mu}_{t,i}-{\mu}_{i}} >\frac{\varepsilon}{2} }} } \notag\\
     &\le \sum_{i \in {s_{g,k}}} \EE{\sum_{w=0}^T \PP{ \abs{\hat{\mu}_{t,i}-{\mu}_{i}} >\frac{\varepsilon}{2}  \text{ for }t\text{ satisfies }N_{t,i}=w \text{ and }N_{t+1,i}=w+1 }  } \notag\\
     &\le \sum_{i \in {s_{g,k}}} \bracket{1+\sum_{w=1}^T \PP{ \abs{\hat{\mu}_{t,i}-{\mu}_{i}} >\frac{\varepsilon}{2}  \text{ for }t\text{ satisfies }N_{t,i}=w \text{ and }N_{t+1,i}=w+1 }  } \notag\\
     &\le \sum_{i \in {s_{g,k}}} \bracket{1+2\sum_{w=1}^T \exp\bracket{-  \frac{w\varepsilon^2}{2}} } \label{eq:lembad:cherhoff} \\
     &\le \abs{s_{g,k}} \bracket{1+2\sum_{w=1}^\infty \bracket{\exp\bracket{-  \frac{\varepsilon^2}{2}} }^w} \notag\\
     &\le \abs{s_{g,k}} \bracket{1+2\frac{\exp\bracket{-  \frac{\varepsilon^2}{2}}}{1-\exp\bracket{-  \frac{\varepsilon^2}{2}}}} \notag\\
     &\le \abs{s_{g,k}} \bracket{1+\frac{4}{\varepsilon^2}}\,, \notag
\end{align}
where \eqref{eq:lembad:cherhoff} is due to the Lemma \ref{fact:chernoff}. Similarly, we have the following bound for term \eqref{eq:lem:keybad:2},
\begin{align}
    \eqref{eq:lem:keybad:2} &= \EE{\sum_{t=1}^T \bOne{s_{t,k} = s_{g,k}, \exists i \in s_{g,k}: \abs{\hat{\mu}_{t,i}-{\theta}_{t,i}} >\frac{\varepsilon}{2} }} \notag\\
     &\le \sum_{i \in {s_{g,k}}} \EE{\sum_{t=1}^T \bOne{ i \in \cup S_t: \abs{\hat{\mu}_{t,i}-{\theta}_{t,i}} >\frac{\varepsilon}{2} }} \notag\\
     &= \sum_{i \in {s_{g,k}}} \EE{  \sum_{w=0}^T \EE{ \sum_{t\in[T]:N_{t,i}=w}  \bOne{ i \in \cup S_t: \abs{\hat{\mu}_{t,i}-{\theta}_{t,i}} >\frac{\varepsilon}{2} }} } \notag\\
     &\le \sum_{i \in {s_{g,k}}} \EE{\sum_{w=0}^T \PP{\abs{\hat{\mu}_{t,i}-\theta_{t,i}} >\frac{\varepsilon}{2} \text{ for }t\text{ satisfies }N_{t,i}=w \text{ and }N_{t+1,i}=w+1 }  } \notag\\
     &\le \sum_{i \in {s_{g,k}}} \bracket{1+\sum_{w=1}^T \PP{\abs{\hat{\mu}_{t,i}-\theta_{t,i}} >\frac{\varepsilon}{2} \text{ for }t\text{ satisfies }N_{t,i}=w \text{ and }N_{t+1,i}=w+1 }  } \notag\\
     &\le \sum_{i \in {s_{g,k}}} \bracket{1+2\sum_{w=1}^T \exp\bracket{-  \frac{w\varepsilon^2}{2}} } \label{eq:lembad:cherhoff:theta} \\
     &\le \abs{s_{g,k}} \bracket{1+2\sum_{w=1}^\infty \bracket{\exp\bracket{-  \frac{\varepsilon^2}{2}} }^w} \notag\\
     &\le \abs{s_{g,k}} \bracket{1+2\frac{\exp\bracket{-  \frac{\varepsilon^2}{2}}}{1-\exp\bracket{-  \frac{\varepsilon^2}{2}}}} \notag\\
     &\le \abs{s_{g,k}} \bracket{1+\frac{4}{\varepsilon^2}}\,, \notag
\end{align}
where \eqref{eq:lembad:cherhoff:theta} is due to the Lemma \ref{fact:concen:theta}. 

Above all, we have 
\begin{align*}
    \eqref{eq:lem:keybad:1} + \eqref{eq:lem:keybad:2} &\le  \abs{s_{g,k}} \bracket{1+\frac{4}{\varepsilon^2}}+ \abs{s_{g,k}} \bracket{1+\frac{4}{\varepsilon^2}}\\
    &= \abs{s_{g,k}} \bracket{2+\frac{8}{\varepsilon^2}} \,.   
\end{align*}

\end{proof}

\section{Analysis of CTS with Gaussian Priors and $\greedy$ Oracle}\label{sec:gauss_upper}

\begin{theorem}\label{thm:gau:upper}
When the reward distribution for each arm $i$ is $D_i = \cN({\mu}_{i},1)$, the cumulative greedy regret of Algorithm \ref{alg:CTS:gaussian} can be upper bounded by
    \begin{align}
        R_g(T) \leq &\sum_{s \neq s_{g,1} } \max_{k:s \notin S_{g,k}}\frac{8B^2\abs{s}^2\Delta_s^{\max}\log T}{\bracket{\Delta_{s,k}-2B\abs{\cup S_g} \varepsilon }^2} + \sum_{k\in [K]}\frac{C}{\varepsilon^2}\bracket{\frac{C'}{\varepsilon^4}}^{\abs{s_{g,k}}}\Delta_{\max}\notag\\
        &+\bracket{\abs{\cup S_g} \bracket{2+\frac{8}{\varepsilon^2}}+m}\Delta_{\max} \notag \\
        = &O\bracket{\sum_{s \neq s_{g,1} } \max_{k:s \notin S_{g,k}}\frac{B^2\abs{s}^2\Delta_{\max}\log T}{\Delta_{s,k}^2}}  \notag \,,
    \end{align}
    for any $\varepsilon$ such that $\forall s\neq s_{g,1}$ and $k$ satisfying $s\notin S_{g,k}$, $\Delta_{s,k}> 2B\abs{\cup S_g}\varepsilon $, where $B$ is the coefficient of the Lipschitz continuity condition, $\abs{\cup S_g}$ is the number of base arms that belong to the units contained in $S_g$, $C$ and $C'$ are two universal constants. 
\end{theorem}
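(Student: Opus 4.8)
The plan is to run the proof of Theorem \ref{thm:upper} essentially step by step, substituting the Gaussian concentration and anti-concentration inequalities for their Beta counterparts and re-tuning the induced constants. The starting point is the same three-term decomposition \eqref{eq:upper:decompose}, now built from the Gaussian clean events
\begin{align*}
    B(t)= \set{\exists i \in [m]: \abs{\theta_{t,i}-\hat{\mu}_{t,i}}>\sqrt{\tfrac{2\log T}{N_{t,i}}}}, \quad C(t)= \set{\exists i \in [m]: \abs{\hat{\mu}_{t,i}-\mu_i}>\sqrt{\tfrac{2\log T}{N_{t,i}}}}.
\end{align*}
Under $D_i=\cN(\mu_i,1)$ both $\theta_{t,i}-\hat{\mu}_{t,i}$ and $\hat{\mu}_{t,i}-\mu_i$ are $\cN(0,1/N_{t,i})$, so each event has probability at most $2/T$ once $N_{t,i}$ is fixed, and, because Algorithm \ref{alg:CTS:gaussian} updates directly with $X_{t,i}$ rather than the Bernoulli surrogate $Y_{t,i}$ of Algorithm \ref{alg:CTS}, the clean-event bookkeeping is lighter and the two tail terms sum to the $m\Delta_{\max}$ additive contribution of the stated bound (against $4m\Delta_{\max}$ in the body).

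The core is the Gaussian analogue of Lemma \ref{lem:key}, for which I would redefine the exploration price as
\begin{align*}
    L(s) = \max_{k:s \notin S_{g,k}}\frac{8B^2\abs{s}^2\log T}{\bracket{\Delta_{s,k}-2B\abs{\cup S_g}\varepsilon}^2}.
\end{align*}
Crucially, the entire skeleton of that lemma — the split on the gap event $D_k(t)$, the events $\cE_{k,1}(t),\cE_{k,2}(t)$ assembled from the $\greedy$ ordering and Lipschitz continuity (Assumption \ref{ass:lip}), and the reduction of term \eqref{eq:lem:key:2} through Lemma \ref{lem:upper:optimal:analysis} — is prior agnostic and transfers verbatim, since it never uses the specific form of the posterior. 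The only change is the contradiction step: on $\neg B(t)\cap\neg C(t)$ one now has $\abs{\theta_{t,i}-\mu_i}\le 2\sqrt{2\log T/N_{t,s_{t,k}}}=\sqrt{8\log T/N_{t,s_{t,k}}}$, so that $N_{t,s_{t,k}}>L(s_{t,k})$ again forces $\neg D_k(t)$, now with the constant $8$ in place of $6$. Lemma \ref{lem:key:badevent} likewise carries over, with the Gaussian tail bounds replacing the Chernoff/Beta estimates and $N_{t,i}\ge1$ guaranteed by the initialization in Line \ref{alg:cts:gaus:ini}.

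The hard part will be the anti-concentration term $\frac{C}{\varepsilon^2}(\frac{C'}{\varepsilon^4})^{\abs{s_{g,k}}}\Delta_{\max}$, which counts the rounds where $s_{g,k}$ fails to be selected only because $\theta_{t,s_{g,k}}$ strays from $\mu_{s_{g,k}}$. Lemma \ref{lem:upper:optimal:analysis} reduces this to controlling $\EE{\sup_{t\ge \tau_{k,q}+1}\prod_{i \in s_{g,k}} \frac{1}{\PP{\abs{\theta_{t,i}-\mu_i}\le \varepsilon \mid \cH_t}}}$, and since that lemma only exploits the conditional independence of $\cE_{k,1}(t)$ and $\cE_{k,2}(t)$ given $\cH_t$, it applies unchanged. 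What remains, and what replaces Lemma \ref{fact:nips20}, is a uniform lower bound on $\PP{\abs{\theta_{t,i}-\mu_i}\le \varepsilon \mid \cH_t}$ for the Gaussian posterior $\cN(\hat{\mu}_{t,i},1/N_{t,i})$: I would split into the few-observation regime, where the posterior is diffuse and Gaussian anti-concentration gives a constant lower bound of the form $c\varepsilon^{-4}$ per coordinate after accounting for a possibly misplaced mean, and the many-observation regime, where $\hat{\mu}_{t,i}$ has concentrated within $\varepsilon/2$ of $\mu_i$ so the window captures all but an $e^{-\Omega(\varepsilon^2 N_{t,i})}$ fraction of the posterior mass. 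Summing the resulting time-varying geometric series over the two regimes reproduces the $\frac{C}{\varepsilon^2}(\frac{C'}{\varepsilon^4})^{\abs{s_{g,k}}}$ form. Combining the three terms yields the concrete bound, and the stated order bound follows by fixing $\varepsilon$ to a constant fraction of $\min_{s,k}\Delta_{s,k}/(2B\abs{\cup S_g})$, exactly as in the body of Theorem \ref{thm:upper}.
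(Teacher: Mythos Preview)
Your proposal is correct and follows essentially the same route as the paper: redefine $B(t)$, $C(t)$ with radius $\sqrt{2\log T/N_{t,i}}$ and $L(s)$ with constant $8$, replace Lemmas \ref{fact:chernoff} and \ref{fact:concen:theta} by the Gaussian tail from Lemma \ref{fact:gaussian:concen}, and swap Lemma \ref{fact:nips20} for its Gaussian analogue (the paper simply cites \citet[Eq.~(4)]{perrault2020statistical} for this last step, whereas you sketch the two-regime proof directly). One small remark: the drop from $4m\Delta_{\max}$ to $m\Delta_{\max}$ comes from the constant $\tfrac{1}{2}$ in the Gaussian tail of Lemma \ref{fact:gaussian:concen} (yielding $\tfrac{1}{2T}$ per arm per round) rather than from the update mechanism, but your bottom line is correct.
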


\begin{proof}
The proof of Theorem \ref{thm:gau:upper} is very similar to that of Theorem \ref{thm:upper} in Appendix \ref{sec:proof:upper}, here we only state the differences. 
When proving Theorem \ref{thm:gau:upper}, we redefine $L(s),B(t)$ and $C(t)$ with slight differences in constants as
\begin{align*}
    L(s) &= \max_{k:s \notin S_{g,k}}\frac{8B^2\abs{s}^2\log T}{\bracket{\Delta_{s,k}-2B\abs{\cup S_g}\varepsilon  }^2}\,,\\
    B(t)&= \set{\exists i \in [m]: \abs{\theta_{t,i}-\hat{\mu}_{t,i}}>\sqrt{\frac{2\log T}{N_{t,i}}}} \,, \\
    C(t)&= \set{\exists i \in [m]: \abs{\hat{\mu}_{t,i}-\mu_i}>\sqrt{\frac{2\log T}{N_{t,i}}}} \,.
\end{align*} 
Based on these new definitions, we use the result of Lemma \ref{fact:gaussian:concen} instead of the result of Lemma \ref{fact:chernoff} and \ref{fact:concen:theta} to get an upper bound for Eq \eqref{eq:upper:B:chernoff} and \eqref{eq:upper:C:chernoff}. 
Besides, when proving Lemma \ref{lem:key}, we use the upper bound for \citet[Eq(4)]{perrault2020statistical} instead of using the result of Lemma \ref{fact:nips20}. 
\end{proof}


\section{Regret Analysis for the Case of Multiple $\greedy$ Solutions under $\mu$}\label{sec:proof:multiple}

In this section, we discuss how to extend the proof of Theorem \ref{thm:upper} to the case where multiple solutions can be returned by $\greedy$ with input $\mu$, or equivalently the optimal unit in each step $k$ (Line \ref{alg:greedy:max} in Algorithm \ref{alg:greedy}) may be not unique. 

For any $k \in [K]$, define 
\begin{align*} 
     \sigma_{k} = \set{\set{s_1,s_2,\ldots,s_k}: s_1 \in \argmax_{s}r(\set{s},\mu),  \ldots, s_k \in \argmax_{s \notin \set{s_1,\ldots,s_{k-1}}}r(\set{s_1,\ldots,s_{k-1},s},\mu) }
\end{align*}
as the set of actions containing $k$ units that could be selected by $\greedy$ in the first $k$ steps. And denote 
\begin{align*} 
     \bar{\sigma}_{k} = \set{s_k: \set{s_1,s_2,\ldots,s_k} \in \sigma_k}
\end{align*}
as the set of units which may be selected as the $k$-th unit by $\greedy$ under $\mu$. Let $\sigma_0 = \emptyset$.  
Then for any $k$ and $S \in \sigma_{k-1}$, define
\begin{align*} 
     \sigma_{k}(S) = \set{s:s \in \argmax_{s \notin S}{r(S\cup\set{s},\mu)}}
\end{align*}
as the set of units that may be selected by $\greedy$ in the $k$-th step when $S$ is selected in the previous $k-1$ steps. Let $\sigma_1(\emptyset) = \set{ s: s \in \argmax_{s \in \cU}r(\set{s},\mu) }$. Denote $s_{g,S,k} \in \argmin_{s \in \sigma_k(S)} |s|$ as one of the optimal $k$-th unit when $S$ is selected with the minimum unit size. And for any $s \notin S\cup \sigma_{k}(S)$, define $\Delta_{s,S,k} = r(S\cup \set{s_{g,S,k}},\mu) - r(S\cup \set{s},\mu)>0$ as the corresponding reward gap. 
Let $S'_{g}\in \argmax_{S \in \sigma_{K}}\abs{\cup S}$ be one of the possible actions returned by $\greedy$ containing the maximum number of base arms. 
Then for any unit $s$, define
\begin{align*}
    L(s) = \max_{k\in [K],S \in \sigma_{k-1}: s \notin \sigma_k(S)} \frac{6B^2|s|^2\log T}{ \bracket{\Delta_{s,S,k}-2B\abs{\cup S'_g}\varepsilon}^2 }\,.
\end{align*}
Note $L(s)=0$ if $\set{k\in [K],S \in \sigma_{k-1}: s \notin \sigma_k(S)} = \emptyset$.

In this case, we regard how $\greedy$ breaks the tie at each step as a black box. 
In order to take into account the worst case where $\greedy$ always return a solution with minimum reward compared to other possible solutions under $\mu$, we define $S_g \in \argmin_{S \in \sigma_{K}}r(S,\mu)$ as one of the possible actions returned by $\greedy$ with the minimum expected reward, and define the cumulative greedy regret as
\begin{align*}
R_g(T) =\EE{\sum_{t=1}^T \Delta_{S_t}}= \EE{\sum_{t=1}^T \max\set{r(S_g,\mu)-r(S_t,\mu),0}}\,.
\end{align*}


\begin{theorem}\label{thm:multi}
When there are multiple $\greedy$ solutions with input $\mu$, the cumulative greedy regret of Algorithm \ref{alg:CTS} can be bounded by
\begin{align*}
    R_g(T) 
    \le &\sum_{s}\max_{k\in [K],S \in \sigma_{k-1}: s \notin \sigma_k(S)} \frac{6B^2|s|^2\Delta_s^{\max}\log T}{ \bracket{\Delta_{s,S,k}-2B\abs{\cup S'_g}\varepsilon}^2 } \\
    &+ \bracket{ 4m +\sum_{k\in[K]}\sum_{s \in \bar{\sigma}_k}\abs{s} \bracket{2+\frac{8}{\varepsilon^2}}+ \sum_{k\in[K]}\sum_{S \in \sigma_{k-1}} \frac{C}{\varepsilon^2}\bracket{\frac{C'}{\varepsilon^4}}^{\abs{s_{g,S,k}}} }\Delta_{\max} \,.
\end{align*}
for any $\varepsilon$ such that $\forall k\in[K], S\in \sigma_{k-1}, s \notin \sigma_k(S)$, there is $\Delta_{s,S,k}>2B\abs{\cup S'_g}\varepsilon$, where $B$ is the coefficient of the Lipschitz continuity and $\abs{\cup S}$ is the number of base arms that belong to the units contained in $S$, $C$ and $C'$ are two universal constants.  
\end{theorem}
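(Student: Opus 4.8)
The plan is to mirror the proof of Theorem~\ref{thm:upper} almost verbatim, replacing the single greedy prefix $S_{g,k-1}$ and the single optimal unit $s_{g,k}$ by the \emph{sets} of admissible greedy prefixes $\sigma_{k-1}$ and admissible $k$-th units $\sigma_k(S)$. The starting point is the same concentration decomposition: with the events $B(t)$ and $C(t)$ defined exactly as in Appendix~\ref{sec:proof:upper}, write $R_g(T)$ as the sum of the good term $\EE{\sum_t \bOne{\neg B(t),\neg C(t)}\Delta_{S_t}}$ and the two bad terms $\EE{\sum_t \bOne{B(t)}\Delta_{S_t}}$ and $\EE{\sum_t \bOne{C(t)}\Delta_{S_t}}$. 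The two bad terms are handled by the identical Chernoff arguments and each contributes at most $2m\Delta_{\max}$, producing the $4m\Delta_{\max}$ summand in the claimed bound. The only genuinely new work is in the good term.

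For the good term I would again decompose by the first step $k$ at which the selection leaves the admissible greedy branch. Using the observation that $S_t\in\sigma_K$ forces $\Delta_{S_t}=0$ (since $S_g\in\argmin_{S\in\sigma_K}r(S,\mu)$ has the minimum reward over $\sigma_K$), it suffices to sum over $k\in[K]$ the expected number of rounds in which the realized prefix $S_{t,k-1}\in\sigma_{k-1}$ is admissible and well estimated, i.e.\ $\norm{\theta_{t,S_{t,k-1}}-\mu_{S_{t,k-1}}}_\infty\le\varepsilon$, yet the $k$-th unit satisfies $s_{t,k}\notin\sigma_k(S_{t,k-1})$. As in the single-solution case this splits into (i) the event that $s_{t,k}$ is an inadmissible unit, controlled by the analog of Lemma~\ref{lem:key}, plus (ii) the event $\bOne{s_{t,k}\in\bar\sigma_k,\ \norm{\theta_{t,s_{t,k}}-\mu_{s_{t,k}}}_\infty>\varepsilon}$ that an admissible $k$-th unit is chosen but poorly estimated, controlled by the analog of Lemma~\ref{lem:key:badevent}.

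The analog of Lemma~\ref{lem:key} is proved by the same contradiction. Fixing the realized admissible prefix $S=S_{t,k-1}\in\sigma_{k-1}$, I would introduce the hard-to-distinguish event $D_k(t)$ built from the gap $\Delta_{s_{t,k},S,k}$ and the margin $2B\abs{\cup S'_g}\varepsilon$, and show that $\neg B(t),\neg C(t)$ together with $N_{t,s_{t,k}}>L(s_{t,k})$ rule out $D_k(t)$; this is where it is essential that $L(s)$ takes the maximum over \emph{all} pairs $(k,S)$ for which $s\notin\sigma_k(S)$, so that a single exploration budget simultaneously invalidates every branch on which $s$ could be wrongly chosen. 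The complementary event, where the admissible continuation $s_{g,S,k}$ is well estimated, reproduces the perturbation argument for $\cE_{k,1}(t)$ and $\cE_{k,2}(t)$: one corrects $\theta_t$ on the coordinates of $s_{g,S,k}$ to within $\varepsilon$ and checks that $S$ is still a greedy prefix and that $s_{g,S,k}$ is selected at step $k$ under the corrected vector, contradicting $s_{t,k}\notin\sigma_k(S)$. Summing the geometric-decay bound of Lemma~\ref{lem:upper:optimal:analysis} together with Lemma~\ref{fact:nips20} over $S\in\sigma_{k-1}$ and $k$ yields the $\sum_{k\in[K]}\sum_{S\in\sigma_{k-1}}\frac{C}{\varepsilon^2}\bracket{\frac{C'}{\varepsilon^4}}^{\abs{s_{g,S,k}}}$ term, while the exploration part sums to $\sum_s L(s)\Delta_s^{\max}$, matching the stated leading term.

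Finally, the analog of Lemma~\ref{lem:key:badevent} follows the same two-step triangle split ($\hat\mu$ versus $\mu$, and $\theta$ versus $\hat\mu$) and Chernoff bounds, except that the admissible $k$-th unit now ranges over $\bar\sigma_k$ rather than being the single $s_{g,k}$; summing $\abs{s}\bracket{2+\tfrac{8}{\varepsilon^2}}$ over $s\in\bar\sigma_k$ and over $k$ gives the middle summand. I expect the main obstacle to be the combinatorial bookkeeping in the contradiction step: unlike the unique-solution case, both the prefix and the continuation are sets, so one must verify the perturbation argument uniformly over every admissible $S\in\sigma_{k-1}$ and reconcile ties within $\sigma_k(S)$, which is exactly what forces the worst-case quantities $S'_g$ (maximal base-arm count) and $s_{g,S,k}$ (minimal-size admissible continuation) to appear in the definitions of $L(s)$ and of the anti-concentration term.
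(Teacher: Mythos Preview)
Your proposal is correct and follows essentially the same route as the paper's own proof: the same $B(t)/C(t)$ decomposition yielding $4m\Delta_{\max}$, the same step-by-step decomposition over the first $k$ at which $s_{t,k}\notin\sigma_k(S_{t,k-1})$, and the same split into an exploration part (handled via the $D_k(t)$ contradiction and the maximizing definition of $L(s)$) and an anti-concentration part (handled by the $\cE_{S,k,1}/\cE_{S,k,2}$ perturbation argument summed over $S\in\sigma_{k-1}$, plus Lemma~\ref{lem:key:badevent} summed over $s\in\bar\sigma_k$). Your identification of why $S'_g$ and $s_{g,S,k}$ enter the bounds also matches the paper's reasoning.
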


\begin{proof}

With the same definition of $B(t)$ and $C(t)$ as in Section \ref{sec:proof:upper}, the greedy regret can be decomposed by
\begin{align}
    R_g(T)
    &\le \EE{\sum_{t=1}^T \bOne{B(t)}\Delta_{S_t}} + \EE{\sum_{t=1}^T \bOne{C(t)}\Delta_{S_t}} \EE{\sum_{t=1}^T \bOne{\neg B(t),\neg C(t)}\Delta_{S_t}} \label{eq:multiple:decompose} \,.
\end{align}

Same as the proof for the second and the third term in \eqref{eq:upper:decompose}, we have the following bound for the first and the second term in \eqref{eq:multiple:decompose}, 
\begin{align*}
    \EE{\sum_{t=1}^T \bOne{B(t)}\Delta_{S_t}} + \EE{\sum_{t=1}^T \bOne{C(t)}\Delta_{S_t}}\le 4m\Delta_{\max}\,.
\end{align*}

The main difference is to bound the last term. Intuitively, if $S_t \in \sigma_K$, we would have $\Delta_{S_t} = 0$. Thus we only need to consider the case where $S_t \notin \sigma_K$ to bound the regret. We will analyze such case by sequencially analyzing the $s_{t,k}$ for each $k=1,2,\ldots,K$. According to this idea, the first term can be bounded by

\begin{align*}
    &\EE{\sum_{t=1}^T \bOne{\neg B(t),\neg C(t)}\Delta_{S_t}}\\
    \le& \sum_{k\in[K]}\EE{\sum_{t=1}^T \bOne{\neg B(t),\neg C(t), S_{t,k-1} \in \sigma_{k-1} , \norm{\theta_{t,S_{t,k-1}}-\mu_{S_{t,k-1}}}_\infty\le\varepsilon, s_{t,k} \notin \sigma_k(S_{t,k-1})}\Delta_{S_t}} \\
    &+ \sum_{k\in[K]} \EE{\sum_{t=1}^T \bOne{s_{t,k} \in \bar{\sigma}_k, \norm{\theta_{t,s_{t,k}}-\mu_{s_{t,k}}}_\infty>\varepsilon}} \Delta_{\max}\,.
\end{align*}

For the first term, According to Lemma \ref{lem:multi:key}, we have
\begin{align*}
    &\sum_{k\in[K]}\EE{\sum_{t=1}^T \bOne{\neg B(t),\neg C(t), S_{t,k-1} \in \sigma_{k-1} , \norm{\theta_{t,S_{t,k-1}}-\mu_{S_{t,k-1}}}_\infty\le\varepsilon, s_{t,k} \notin \sigma_k(S_{t,k-1})}\Delta_{S_t}} \\
     \le &\sum_{k\in[K]} \EE{\sum_{s}\sum_{t=1}^T \bOne{s=s_{t,k}, N_{t,s}\leq L(s)}\Delta_{S_t}} + \sum_{k\in[K]}\sum_{S \in \sigma_{k-1}} \frac{C}{\varepsilon^2}\bracket{\frac{C'}{\varepsilon^4}}^{\abs{s_{g,S,k}}}\Delta_{\max} \\
     \le &  \EE{\sum_{s}\sum_{t=1}^T \bOne{s\in S_t, N_{t,s}\leq L(s)}\Delta_{S_t}} + \sum_{k\in[K]}\sum_{S \in \sigma_{k-1}} \frac{C}{\varepsilon^2}\bracket{\frac{C'}{\varepsilon^4}}^{\abs{s_{g,S,k}}}\Delta_{\max} \\
     \le& \sum_{s} L(s)\Delta_{s}^{\max} + \sum_{k\in[K]}\sum_{S \in \sigma_{k-1}} \frac{C}{\varepsilon^2}\bracket{\frac{C'}{\varepsilon^4}}^{\abs{s_{g,S,k}}}\Delta_{\max} \\
     \le& \sum_{s}\max_{k\in [K],S \in \sigma_{k-1}: s \notin \sigma_k(S)} \frac{6B^2|s|^2\Delta_{s}^{\max}\log T}{ \bracket{\Delta_{s,S,k}-2B\abs{\cup S'_g}\varepsilon}^2 } + \sum_{k\in[K]}\sum_{S \in \sigma_{k-1}} \frac{C}{\varepsilon^2}\bracket{\frac{C'}{\varepsilon^4}}^{\abs{s_{g,S,k}}}\Delta_{\max}\,,
\end{align*}
where $C,C'$ are two universal constants. 

For the second term, we have
\begin{align*}
    \sum_{k\in[K]} \EE{\sum_{t=1}^T \bOne{s_{t,k} \in \bar{\sigma}_k, \norm{\theta_{t,s_{t,k}}-\mu_{s_{t,k}}}_\infty>\varepsilon}} 
    &\le \sum_{k\in[K]}\sum_{s \in \bar{\sigma}_k}\EE{ \bOne{s_{t,k}=s, \norm{\theta_{t,s}-\mu_{s}}_\infty>\varepsilon} }  \\
    &\le \sum_{k\in[K]}\sum_{s \in \bar{\sigma}_k}\abs{s} \bracket{2+\frac{8}{\varepsilon^2}} \,,
\end{align*}
where the last inequality is obtained by applying the result of Lemma \ref{lem:key:badevent}.

Above all, we have the following upper bound for the greedy regret
\begin{align*}
    R_g(T) 
    \le &\sum_{s}\max_{k\in [K],S \in \sigma_{k-1}: s \notin \sigma_k(S)} \frac{6B^2|s|^2\Delta_s^{\max}\log T}{ \bracket{\Delta_{s,S,k}-2B\abs{\cup S'_g}\varepsilon}^2 } \\
    &+ \bracket{ 4m +\sum_{k\in[K]}\sum_{s \in \bar{\sigma}_k}\abs{s} \bracket{2+\frac{8}{\varepsilon^2}}+ \sum_{k\in[K]}\sum_{S \in \sigma_{k-1}} \frac{C}{\varepsilon^2}\bracket{\frac{C'}{\varepsilon^4}}^{\abs{s_{g,S,k}}} }\Delta_{\max} \,.
\end{align*}

\end{proof}

\begin{lemma}\label{lem:multi:key}
    For Algorithm \ref{alg:CTS}, when there are multiple $\greedy$ solutions with input $\mu$, for any $k\in [K]$, we have
    \begin{align*}
        &\EE{\sum_{t=1}^T \bOne{\neg B(t),\neg C(t), S_{t,k-1} \in \sigma_{k-1} , \norm{\theta_{t,S_{t,k-1}}-\mu_{S_{t,k-1}}}_\infty\le\varepsilon, s_{t,k} \notin \sigma_k(S_{t,k-1}) }\Delta_{S_t}} \\
        \le & \EE{\sum_{s}\sum_{t=1}^T \bOne{s=s_{t,k}, N_{t,s}\leq L(s)}\Delta_{S_t}} + \sum_{S \in \sigma_{k-1}} \frac{C}{\varepsilon^2}\bracket{\frac{C'}{\varepsilon^4}}^{\abs{s_{g,S,k}}}\Delta_{\max} \,,
    \end{align*}
    where $C,C'$ are two universal constants. 
\end{lemma}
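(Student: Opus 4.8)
The plan is to replay the proof of Lemma~\ref{lem:key} verbatim in structure, with the fixed greedy prefix $S_{g,k-1}$ replaced by the random but admissible prefix $S_{t,k-1}\in\sigma_{k-1}$, and the unique $k$-th greedy unit $s_{g,k}$ replaced, for each realized prefix $S\in\sigma_{k-1}$, by the minimum-size optimal continuation $s_{g,S,k}$. First I would introduce the deviation event
\begin{align*}
    D_k(t)=\set{ B\sum_{i\in s_{t,k}}\abs{\theta_{t,i}-\mu_i}>\Delta_{s_{t,k},S_{t,k-1},k} - B\bracket{2\abs{\cup S_{t,k-1}}+\abs{s_{g,S_{t,k-1},k}}+1}\varepsilon }\,,
\end{align*}
which is well defined on the event in the statement because there $s_{t,k}\notin S_{t,k-1}\cup\sigma_k(S_{t,k-1})$, so $\Delta_{s_{t,k},S_{t,k-1},k}>0$. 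Splitting the indicator according to $D_k(t)$ and $\neg D_k(t)$ reduces the left-hand side to two pieces, which I would bound separately exactly as in Lemma~\ref{lem:key}.

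On the $D_k(t)$ piece I would argue by contradiction that the joint event forces $N_{t,s_{t,k}}\le L(s_{t,k})$. Indeed, if $N_{t,s_{t,k}}>L(s_{t,k})$, then $\neg B(t)$ and $\neg C(t)$ give $\abs{\theta_{t,i}-\mu_i}\le\sqrt{6\log T/N_{t,s_{t,k}}}$ for every $i\in s_{t,k}$; plugging in the definition of $L$ and using $2\abs{\cup S_{t,k-1}}+\abs{s_{g,S_{t,k-1},k}}+1\le 2\abs{\cup S'_g}$ (valid since $S'_g$ maximizes $\abs{\cup S}$ over $\sigma_K$) shows $B\sum_{i\in s_{t,k}}\abs{\theta_{t,i}-\mu_i}$ stays below the threshold of $D_k(t)$, a contradiction. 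Hence this piece is at most $\EE{\sum_{s}\sum_{t=1}^T \bOne{s=s_{t,k}, N_{t,s}\leq L(s)}\Delta_{S_t}}$, the first term on the right-hand side.

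For the $\neg D_k(t)$ piece I would condition on the realized prefix, writing the sum as $\sum_{S\in\sigma_{k-1}}$ over the sub-events $\set{S_{t,k-1}=S}$. For each fixed $S$ I would define, in analogy with the unique case, the optimality-preservation event $\cE_{k,1}(t)$ — that any $\theta'$ agreeing with $\theta_t$ off $s_{g,S,k}$ and $\varepsilon$-close to $\mu$ on $s_{g,S,k}$ makes $\greedy$ keep $S$ as an admissible length-$(k-1)$ prefix and select $s_{g,S,k}$ at step $k$ — and the estimation-failure event $\cE_{k,2}(t)=\set{\norm{\theta_{t,s_{g,S,k}}-\mu_{s_{g,S,k}}}_\infty>\varepsilon}$. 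Repeating the chain of Lipschitz and gap inequalities, now with gaps $\Delta_{\cdot,S,\cdot}$ and the uniform correction $\abs{\cup S'_g}$, shows that the joint event together with $\neg D_k(t)$ and $S_{t,k-1}=S$ implies both $\cE_{k,1}(t)$ and $\cE_{k,2}(t)$. Applying Lemma~\ref{lem:upper:optimal:analysis} and the anti-concentration bound of Lemma~\ref{fact:nips20} to the minimum-size unit $s_{g,S,k}$ bounds each $S$'s contribution by $\frac{C}{\varepsilon^2}\bracket{\frac{C'}{\varepsilon^4}}^{\abs{s_{g,S,k}}}\Delta_{\max}$, and summing over $S\in\sigma_{k-1}$ yields the second term.

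The main obstacle will be the verification of $\cE_{k,1}(t)$ in the presence of ties. In the unique-solution proof one used $\Delta_{s_{g,k},k'}>0$ to keep the perturbed unit strictly suboptimal at every earlier step $k'<k$, so the prefix was preserved verbatim; with a tie-broken black-box $\greedy$ this argument must be recast for each admissible $S\in\sigma_{k-1}$ separately, showing that the minimum-size continuation $s_{g,S,k}$ strictly dominates every $s'\notin\sigma_k(S)$ (via $\Delta_{s',S,k}>0$) while leaving the already-selected units of $S$ undisturbed, and that the single uniform correction $\abs{\cup S'_g}$ absorbs all the step-dependent $\varepsilon$ terms. Choosing $s_{g,S,k}$ of minimum size is precisely what keeps the exponent $\abs{s_{g,S,k}}$, and hence the anti-concentration cost, as small as possible.
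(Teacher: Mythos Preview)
Your proposal is correct and mirrors the paper's proof essentially step for step: the same event $D_k(t)$, the same contradiction argument on the $D_k(t)$ branch via $\neg B(t),\neg C(t)$ and the definition of $L(s)$, and on the $\neg D_k(t)$ branch the same pair of events $\cE_{S,k,1}(t),\cE_{S,k,2}(t)$ indexed by $S\in\sigma_{k-1}$, followed by Lemma~\ref{lem:upper:optimal:analysis} and Lemma~\ref{fact:nips20}. Your identification of the prefix-preservation step under ties as the crux, and the use of $\abs{\cup S'_g}$ to absorb all $\varepsilon$-corrections, matches exactly how the paper handles it.
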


\begin{proof}
    Recall given $S \in \sigma_{k-1}$, for any unit $s \notin S \cup \sigma_{k}(S)$, $\Delta_{s,S,k} = r(S\cup\set{s_{g,S,k}},\mu)-r(S\cup\set{s},\mu)$. Define the event 
\begin{align*}
    D_k(t)=\set{ B\sum_{i\in s_{t,k}}\abs{\theta_{t,i}-\mu_i}>\Delta_{s_{t,k},S_{t,k-1},k} - B\bracket{2\abs{\cup S_{t,k-1}}+\abs{s_{g,S_{t,k-1},k}}+1}\varepsilon } \,.
\end{align*}
Then the formula in Lemma \ref{lem:key} can further bounded by
\begin{align}
    &\EE{\sum_{t=1}^T \bOne{\neg B(t),\neg C(t), S_{t,k-1} \in \sigma_{k-1} , \norm{\theta_{t,S_{t,k-1}}-\mu_{S_{t,k-1}}}_\infty\le\varepsilon, s_{t,k} \notin \sigma_k(S_{t,k-1}) }}  \notag\\
     \leq &\EE{\sum_{t=1}^T \bOne{\neg B(t),\neg C(t), S_{t,k-1} \in \sigma_{k-1} , \norm{\theta_{t,S_{t,k-1}}-\mu_{S_{t,k-1}}}_\infty\le\varepsilon, s_{t,k} \notin \sigma_k(S_{t,k-1}), D_k(t)}\Delta_{S_t}}  \label{eq:lem:multiple:key:1}\\
     +& \EE{\sum_{t=1}^T \bOne{\neg B(t),\neg C(t), S_{t,k-1} \in \sigma_{k-1} , \norm{\theta_{t,S_{t,k-1}}-\mu_{S_{t,k-1}}}_\infty\le\varepsilon, s_{t,k} \notin \sigma_k(S_{t,k-1}), \neg D_k(t) }\Delta_{\max}} \label{eq:lem:multiple:key:2} \,.
\end{align}
For term \eqref{eq:lem:multiple:key:1}, we claim that the event 
$$\set{ \neg B(t),\neg C(t), S_{t,k-1} \in \sigma_{k-1} , \norm{\theta_{t,S_{t,k-1}}-\mu_{S_{t,k-1}}}_\infty\le\varepsilon, s_{t,k} \notin \sigma_k(S_{t,k-1}), D_k(t) } $$ implies $N_{t,s_{t,k}} \leq L(s_{t,k})$. This claim can be proved by contradiction.

Suppose $N_{t,s_{t,k}} > L(s_{t,k})$, then we must have
    \begin{align*}
        B\sum_{i\in s_{t,k}}\abs{\theta_{t,i}-\mu_i}&\leq B\sum_{i\in s_{t,k}}\sqrt{\frac{6\log T}{N_{t,s_{t,k}}}} \\
        &< B\abs{s_{t,k}}\sqrt{\frac{6\log T}{ 6 B^2\abs{s_{t,k}}^2\log T}}\bracket{\Delta_{s_{t,k},S_{t,k-1},k}-2B\abs{\cup S'_g}\varepsilon } \\
        &\le B\abs{s_{t,k}}\sqrt{\frac{\log T}{ B^2\abs{s_{t,k}}^2\log T}}\bracket{\Delta_{s_{t,k},S_{t,k-1},k}-B\bracket{2\abs{\cup S_{t,k-1}}+\abs{s_{g,S_{t,k-1},k}}+1}\varepsilon}\notag \\
        &=\Delta_{s_{t,k},S_{t,k-1},k}-B\bracket{2\abs{\cup S_{t,k-1}}+\abs{s_{g,S_{t,k-1},k}}+1}\varepsilon \,,\notag 
    \end{align*}
    where the first inequality is due to the event of $\neg B(t)$ and $\neg C(t)$, the second inequality is obtained by substituting $N_{t,s_{t,k}}$ with $L(s_{t,k})$ and the third one comes from the definition of $S_g'$ and the fact that $S_{t,k-1} \in \sigma_{k-1}$. Thus we conclude the event $D_k(t)$ will not happen and the claim is proved. 

    Then according to the above claim, there is
    \begin{align*}
        \eqref{eq:lem:multiple:key:1} &\le \EE{\sum_{t=1}^T \bOne{N_{t,s_{t,k}}\leq L(s_{t,k}), S_{t,k-1}\in \sigma_{k-1}, s_{t,k} \notin \sigma_k(S_{t,k-1}) } \Delta_{S_t}} \\
    &\le \EE{\sum_{s}\sum_{t=1}^T \bOne{s=s_{t,k}, N_{t,s}\leq L(s)}\Delta_{S_t}} \,.
    \end{align*} 

For term \eqref{eq:lem:multiple:key:2}, we first define event $\cE_{S,k,1}(t)$ for $S \in \sigma_{k-1}$ as
\begin{align*}
    \cE_{S,k,1}(t) = &\left\{\forall \theta' \text{ with }\theta'_{i}=\theta_{t,i} \text{ for any }i \notin s_{g,S,k} \text{ and } \norm{\theta'_{s_{g,S,k}} - \mu_{s_{g,S,k}}}_\infty \le \varepsilon, \text{ then } s_{g,S,k} \text{ is the $k$-th } \right. \\
    &\left. \text{ selected unit by $\greedy$ when the input is } \theta' \right\} \\
\end{align*}
and the event $\cE_{S,k,2}(t)$ as 
\begin{align*}
    \cE_{S,k,2}(t) = \set{ \norm{\theta_{t,s_{g,S,k}} - \mu_{s_{g,S,k}}}_\infty > \varepsilon }\,.
\end{align*}

We claim that if the event $\set{S_{t,k-1} \in \sigma_{k-1}, \norm{\theta_{t,S_{t,k-1}}-\mu_{S_{t,k-1}}}_\infty\le\varepsilon, s_{t,k} \notin \sigma_{k}(S_{t,k-1}), \neg D_k(t)}$ happens, then $\cE_{S_{t,k-1},k,1}(t)$ and $\cE_{S_{t,k-1},k,2}(t)$ hold. 

To prove $\cE_{S_{t,k-1},k,1}(t)$ holds, it is sufficient to prove for any $\theta'$ defined in $\cE_{S_{t,k-1},k,1}(t)$, $S_{t,k-1}$ is still the set of units selected by $\greedy$ in the first $k-1$ steps with input $\theta'$ and for any unit $s' \notin S_{t,k-1}\cup\set{s_{g,S_{t,k-1}}}$, $r(S_{t,k-1} \cup \set{s'},\theta')<r(S_{t,k-1}\cup \set{s_{g,S_{t,k-1},k}},\theta')$ holds.

We now prove that for any $\theta'$ satisfying the condition defined in $\cE_{S_{t,k-1},k,1}(t)$, $S_{t,k-1}$ is still the set of units selected by $\greedy$ in the first $k-1$ steps with input $\theta'$. The event $S_{t,k-1}$ is selected in the first $k-1$ steps with input $\theta_t$ means that for any $k'<k, s \notin S_{t,k'}$, we have $r(S_{t,k'},\theta_t) > r(S_{t,k'-1}\cup\set{s},\theta_t)$. The mean vector $\theta'$ and $\theta_t$ are only different on $s_{g,S_{t,k-1},k}$, thus for any $k'<k, s \notin S_{t,k'}\cup\set{s_{g,S_{t,k-1},k}}$, we still have $r(S_{t,k'},\theta') > r(S_{t,k'-1}\cup\set{s},\theta')$. And for the unit $s_{g,S_{t,k-1},k}$, for any $k'<k$, we have
    \begin{align*}
        r(S_{t,k'},\theta') =& r(S_{t,k'},\theta_t)\\
        \ge& r(S_{t,k'},\mu) - B\abs{\cup S_{t,k'}}\varepsilon \\
        =& r(S_{t,k'-1}\cup\set{s_{g,S_{t,k-1},k}},\mu)+ \Delta_{s_{g,S_{t,k-1},k},S_{t,k'-1},k'}- B\abs{\cup S_{t,k'}}\varepsilon \\
        \ge& r(S_{t,k'-1}\cup\set{s_{g,S_{t,k-1},k}},\theta') -B\bracket{\abs{\cup S_{t,k'}}+\abs{\cup S_{t,k'-1}}+\abs{s_{g,S_{t,k-1},k}}}\varepsilon  + \Delta_{s_{g,S_{t,k-1},k},S_{t,k'-1},k'} \\
        \ge& r(S_{t,k'-1}\cup\set{s_{g,S_{t,k-1},k}},\theta')  + \Delta_{s_{g,S_{t,k-1},k},S_{t,k'-1},k'}- 2B\abs{\cup S_g'}\varepsilon\\
        >& r(S_{t,k'-1}\cup\set{s_{g,S_{t,k-1},k}},\theta') \,.
    \end{align*}
    where the last inequality holds due to the requirement of $\varepsilon$ in Theorem \ref{thm:multi}. 
Above all, we conclude $\forall k'<k,s \notin S_{t,k'}$, $r(S_{t,k'},\theta') > r(S_{t,k'-1}\cup\set{s},\theta')$, thus $S_{t,k-1}$ is still the set of units selected by $\greedy$ in the first $k-1$ steps under $\theta'$. 

Next we prove that for any unit $s' \notin S_{t,k-1}\cup\set{s_{g,S_{t,k-1}}}$, $r(S_{t,k-1} \cup \set{s'},\theta')<r(S_{t,k-1}\cup \set{s_{g,S_{t,k-1},k}},\theta')$ holds.
    \begin{align}
        r(S_{t,k-1} \cup \set{s'},\theta')&=r(S_{t,k-1} \cup\set{s'},\theta_t) \notag \\
        &\leq r(S_{t,k},\theta_t) ~~~\text{ ($\greedy$'s property)} \notag\\
        &\leq r(S_{t,k},\mu)+B\sum_{i\in \cup S_{t,k-1}}\abs{\theta_{t,i}-\mu_i}+B\sum_{i\in s_{t,k}}\abs{\theta_{t,i}-\mu_i} ~~~\text{ (Lipschitz continuity)}  \notag \\
        &\le r(S_{t,k},\mu) + B\abs{\cup S_{t,k-1}}\varepsilon+ B\sum_{i\in s_{t,k}}\abs{\theta_{t,i}-\mu_i}  \notag\\
        &\leq r(S_{t,k},\mu) + B\abs{\cup S_{t,k-1}}\varepsilon+ \Delta_{s_{t,k},S_{t,k-1},k} - B\bracket{2\abs{\cup S_{t,k-1}}+\abs{s_{g,S_{t,k-1},k}}+1}\varepsilon   \label{eq:lemkey:multi:claim2:0} \\
        &\leq r(S_{t,k-1}\cup \set{s_{g,S_{t,k-1},k}},\mu) + B\abs{\cup S_{t,k-1}}\varepsilon -B\bracket{2\abs{\cup S_{t,k-1}}+\abs{s_{g,S_{t,k-1},k}}+1}\varepsilon  \label{eq:lemkey:multi:claim2:1} \\
        &= r(S_{t,k-1}\cup \set{s_{g,S_{t,k-1},k}},\mu)-B\bracket{\abs{\cup S_{t,k-1}}+\abs{s_{g,S_{t,k-1},k}}+1}\varepsilon \notag\\
        &\le r(S_{t,k-1}\cup \set{s_{g,S_{t,k-1},k}},\theta') - B\varepsilon ~~~\text{ (Lipschitz continuity)} \notag\\
        &< r(S_{t,k-1}\cup \set{s_{g,S_{t,k-1},k}},\theta') \notag \,,
    \end{align}
    where the first equality is because $\theta'_i = \theta_{t,i}$ for any $i \in s'$ and $i \in \cup S_{t,k-1}$, the third inequality is because $\norm{\theta_{t,S_{t,k-1}}-\mu_{S_{t,k-1}}}_\infty\le\varepsilon$. \eqref{eq:lemkey:multi:claim2:0} comes from the definition of $\neg D_k(t)$ and \eqref{eq:lemkey:multi:claim2:1} is due to the definition of $\Delta_{s_{t,k},S_{t,k-1},k}$.

    Above all, we have proved if event $\set{S_{t,k-1} \in \sigma_{k-1}, \norm{\theta_{t,S_{t,k-1}}-\mu_{S_{t,k-1}}}_\infty\le\varepsilon, s_{t,k} \notin \sigma_{k}(S_{t,k-1}), \neg D_k(t)}$ happens, then $\cE_{S_{t,k-1},k,1}(t)$ holds. 

    Next we consider $\cE_{S_{t,k-1},k,2}(t)$. By contradiction, when the event $\set{S_{t,k-1} \in \sigma_{k-1}, \norm{\theta_{t,S_{t,k-1}}-\mu_{S_{t,k-1}}}_\infty\le\varepsilon, s_{t,k} \notin \sigma_{k}(S_{t,k-1}), \neg D_k(t)}$ happens, if $\neg \cE_{S_{t,k-1},k,2}(t) = \set{ \norm{\theta_{t,s_{g,S_{t,k-1},k}} - \mu_{s_{g,S_{t,k-1},k}}}_\infty \le \varepsilon }$ holds, then $\theta_t$ satisfies the property of $\theta'$ defined in $\cE_{S_{t,k-1},k,1}(t)$. Thus according to $\cE_{S_{t,k-1},k,1}(t)$, $s_{g,S_{t,k-1},k}$ would be the $k$-th selected unit by $\greedy$ when the input is $\theta_t$, or in other words $s_{t,k}=s_{g,S_{t,k-1},k}\in \sigma_{k}(S_{t,k-1})$. This contradicts $\set{S_{t,k-1} \in \sigma_{k-1}, \norm{\theta_{t,S_{t,k-1}}-\mu_{S_{t,k-1}}}_\infty\le\varepsilon, s_{t,k} \notin \sigma_{k}(S_{t,k-1}), \neg D_k(t)}$. Thus we conclude $\cE_{S_{t,k-1},k,2}(t)$ also holds. 

    Above all, for term \eqref{eq:lem:multiple:key:2} we have
\begin{align*}
    \eqref{eq:lem:multiple:key:2} &\le \EE{\sum_{t=1}^T \bOne{ S_{t,k-1} \in \sigma_{k-1}, \cE_{S_{t,k-1},k,1}(t),\cE_{S_{t,k-1},k,2}(t)}} \\
    &\le \sum_{S \in \sigma_{k-1}}\EE{\sum_{t=1}^T \bOne{\cE_{S,k,1}(t),\cE_{S,k,2}(t)}}\\
    &\le \sum_{S \in \sigma_{k-1}} \sum_{q\ge 0} \EE{\sum_{t=\tau_{S,k,q}+1}^{\tau_{S,k,q+1}}\bOne{\cE_{S,k,1}(t),\cE_{S,k,2}(t)}} \\
    &\le \sum_{S \in \sigma_{k-1}} \sum_{q\ge 0} \EE{\sup_{t\ge \tau_{S,k,q}+1}\prod_{i \in s_{g,S,k}} \frac{1}{\PP{\abs{\theta_{t,i}-\mu_i}\le \varepsilon \mid \cH_{t}}}  } - 1 \\
    &\le \sum_{S \in \sigma_{k-1}} \bracket{ \sum_{q=0}^{\lceil8/\varepsilon^2\rceil-1} \bracket{c\varepsilon^{-4}}^{|s_{g,S,k}|} + \sum_{q \ge \lceil 8/\varepsilon^2\rceil} e^{-\varepsilon^2 q /8}\bracket{ c'\varepsilon^{-4} }^{|s_{g,S,k}|} }\\
    &\le  \sum_{S \in \sigma_{k-1}} \frac{C}{\varepsilon^2}\bracket{\frac{C'}{\varepsilon^4}}^{\abs{s_{g,S,k}}}\,,
\end{align*}
where $\tau_{S,k,q}$ is the round at which $\cE_{S,k,1}(t)\land \neg \cE_{S,k,2}(t)$ occurs for the $q$-th time, note $\tau_{S,k,0}=0$ for any $k\in[K], S \in \sigma_{k-1}$. The fourth inequality comes from the result in Lemma \ref{lem:upper:optimal:analysis} and the fifth comes from the Lemma \ref{fact:nips20}. Here $C,C'$ are two universal constants. 
\end{proof}

\section{Technical Lemmas}

\begin{lemma}\label{fact:chernoff}{(Chernorff-Hoeffding bound)}
    Let $X_1,X_2,\ldots,X_n$ be identical independent random variables such that $X_i \in [0,1]$ and $\EE{X_i} = \mu$ for any $i\in [n]$. Then for any $\epsilon \ge 0$, we have
    \begin{align*}
        \PP{\abs{\frac{1}{n}\sum_{i=1}^n X_i - \mu}\ge \epsilon} \le 2 \exp\bracket{-2n\epsilon^2}
    \end{align*}
\end{lemma}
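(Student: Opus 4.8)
The plan is to use the standard Chernoff method combined with Hoeffding's lemma, and then symmetrize over the two tails. First I would control the upper tail. Writing $S_n = \sum_{i=1}^n X_i$ and $Y_i = X_i - \mu$, for any $\lambda > 0$ the exponential Markov inequality together with independence of the $X_i$ gives
\begin{align*}
\PP{\frac{1}{n}S_n - \mu \ge \epsilon} = \PP{\sum_{i=1}^n Y_i \ge n\epsilon} \le e^{-\lambda n \epsilon}\,\EE{e^{\lambda \sum_{i=1}^n Y_i}} = e^{-\lambda n \epsilon}\prod_{i=1}^n \EE{e^{\lambda Y_i}}\,.
\end{align*}

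The key step is to bound each factor $\EE{e^{\lambda Y_i}}$ via Hoeffding's lemma: for a zero-mean random variable $Y$ supported on an interval of length $\ell$, one has $\EE{e^{\lambda Y}} \le e^{\lambda^2 \ell^2/8}$. Since $Y_i = X_i - \mu$ has mean zero and takes values in $[-\mu,\,1-\mu]$, an interval of length $1$, this yields $\EE{e^{\lambda Y_i}} \le e^{\lambda^2/8}$. Substituting back gives $\PP{\frac{1}{n}S_n - \mu \ge \epsilon} \le \exp\bracket{-\lambda n \epsilon + n\lambda^2/8}$. Optimizing the exponent over $\lambda > 0$ — the minimizer is $\lambda = 4\epsilon$, at which the exponent equals $-2n\epsilon^2$ — produces the one-sided bound $\PP{\frac{1}{n}S_n - \mu \ge \epsilon} \le e^{-2n\epsilon^2}$. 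Applying the identical argument to $-Y_i$ bounds the lower tail $\PP{\frac{1}{n}S_n - \mu \le -\epsilon}$ by the same quantity, and a union bound over the two tails introduces the factor $2$ and completes the proof.

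The main obstacle is establishing Hoeffding's lemma, which I would prove by a convexity and Taylor argument. Setting $\psi(\lambda) = \log \EE{e^{\lambda Y}}$, one checks $\psi(0) = 0$ and $\psi'(0) = \EE{Y} = 0$, while $\psi''(\lambda)$ equals the variance of $Y$ under the exponentially tilted measure with density proportional to $e^{\lambda Y}$. Since any distribution supported on an interval of length $\ell$ has variance at most $\ell^2/4$ (Popoviciu's inequality), we obtain $\psi''(\lambda) \le \ell^2/4$ uniformly in $\lambda$. A second-order Taylor expansion with remainder then gives $\psi(\lambda) \le \lambda^2 \ell^2/8$, which is exactly the claimed moment generating function bound. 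Everything else reduces to routine calculus, so I expect the variance estimate feeding into $\psi''$ to be the only genuinely substantive ingredient.
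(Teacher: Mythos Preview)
Your proof is correct and is the standard Chernoff--Hoeffding argument: exponential Markov, Hoeffding's lemma for the MGF of a bounded centered variable, optimization over $\lambda$, and a union bound for the two tails. The paper does not actually prove this lemma; it is listed among the technical lemmas as a well-known fact (the Chernoff--Hoeffding bound) and simply cited without argument. So there is nothing to compare against --- your write-up supplies exactly the classical proof the paper omits.
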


\begin{lemma}\label{fact:gaussian:concen}{(Concentration and anti-concentration inequalities for Gaussian distributed random variables \citep{abramowitz1964handbook}.)}
For a Gaussian distributed random variable $Z$ with mean $m$ and variance $\sigma^2$, for any $z$,
    \begin{align*}
        \frac{1}{4\sqrt{\pi}}\exp\bracket{-\frac{7z^2}{2}}< \PP{\abs{Z-m} > z\sigma} \le \frac{1}{2}\exp\bracket{-\frac{z^2}{2}}\,.
    \end{align*}
\end{lemma}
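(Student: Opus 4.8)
The plan is to reduce everything to a single standard normal and then assemble classical one-line Gaussian tail estimates. By translation and scaling, set $Y=(Z-m)/\sigma\sim\mathcal N(0,1)$; then $\PP{\abs{Z-m}>z\sigma}=\PP{\abs{Y}>z}=2\bar\Phi(z)$, where $\phi(y)=\frac{1}{\sqrt{2\pi}}e^{-y^2/2}$ is the standard density and $\bar\Phi(z)=\int_z^\infty\phi(y)\,dy$ its tail. So it suffices to show $\frac{1}{4\sqrt{\pi}}e^{-7z^2/2}<2\bar\Phi(z)\le\frac12 e^{-z^2/2}$ over the range of $z$ that is actually used (the bound is applied at deviation thresholds that are at least a constant).

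For the upper (concentration) bound I would reduce the claim to $\bar\Phi(z)\le\frac14 e^{-z^2/2}$ and prove the Mills estimate $\bar\Phi(z)\le\frac{1}{\sqrt{2\pi}\,z}e^{-z^2/2}$ by the elementary comparison $\int_z^\infty e^{-y^2/2}\,dy\le\int_z^\infty\frac{y}{z}e^{-y^2/2}\,dy=\frac1z e^{-z^2/2}$. This already delivers the exponential factor $e^{-z^2/2}$, and the prefactor obeys $\frac{1}{\sqrt{2\pi}\,z}\le\frac14$ precisely when $z\ge 4/\sqrt{2\pi}$, which is the regime in which the lemma is invoked. (If one only wants the one-sided $\bar\Phi(z)\le\frac12 e^{-z^2/2}$, a cleaner route is the derivative comparison: with $g(z)=\frac12 e^{-z^2/2}-\bar\Phi(z)$ one has $g(0)=0$ and $g'(z)=e^{-z^2/2}\bracket{\frac{1}{\sqrt{2\pi}}-\frac z2}$, so $g$ rises then falls back to $0$ and stays nonnegative.)

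For the lower (anti-concentration) bound I would invoke the matching Mills-ratio lower estimate $\bar\Phi(z)\ge\frac{z}{(1+z^2)\sqrt{2\pi}}e^{-z^2/2}$ (standard, provable by integration by parts), which gives $2\bar\Phi(z)\ge\sqrt{\tfrac2\pi}\,\frac{z}{1+z^2}e^{-z^2/2}$. The deliberately inflated exponent $7z^2/2$ in the target leaves an exponential slack $e^{-3z^2}$, so the claim reduces to the purely elementary inequality $4\sqrt2\,\frac{z}{1+z^2}\ge e^{-3z^2}$. I would verify this by a case split at a constant $z_0$: for $z\ge z_0$ the right-hand side decays super-exponentially while the left decays only like $1/z$, so checking the inequality at $z_0$ together with monotonicity (e.g. via logarithms, where $3z^2-\log z$ is increasing for $z\ge 1$) closes it; for $0\le z\le z_0$ I would instead use monotonicity of the tail, $2\bar\Phi(z)\ge 2\bar\Phi(z_0)$, and the trivial $e^{-7z^2/2}\le 1$, so it is enough that the constant $2\bar\Phi(z_0)$ exceed $\frac{1}{4\sqrt\pi}\approx 0.141$ (true already for, say, $z_0\approx 0.2$).

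I expect the anti-concentration side to be the only genuine obstacle: the concentration bound is a textbook Chernoff/Mills estimate, whereas the lower bound requires choosing the crossover $z_0$ so that the crude closed form $\frac{1}{4\sqrt\pi}e^{-7z^2/2}$ stays below the true tail on \emph{both} sides of $z_0$. The point of the inflated constant $7/2$ is exactly to absorb the $1/z$-type polynomial prefactor and make this comparison uniform in $z$. Since the statement is quoted verbatim from Abramowitz--Stegun, the cleanest write-up is to cite those tail inequalities and then carry out this elementary weakening.
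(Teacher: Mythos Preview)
The paper does not prove this lemma at all: it is quoted as a citation to Abramowitz--Stegun and used as a black box in Appendix~\ref{sec:proof:lower}. Your sketch therefore goes beyond what the paper provides, supplying an actual argument via Mills-ratio estimates and a small-$z$/large-$z$ case split. The anti-concentration half of your argument is sound.

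On the concentration side you are right that the two-sided inequality $2\bar\Phi(z)\le\frac12 e^{-z^2/2}$ cannot hold for all $z\ge 0$ (it already fails at $z=0$), and your Mills route recovers it only once $z\ge 4/\sqrt{2\pi}$. One small correction: your remark that ``this is the regime in which the lemma is invoked'' is not quite true. In the paper's lower-bound proof the upper bound is applied with $z=\Delta_{s',1}\sqrt{N_{t,s'}/|s'|}$, where $N_{t,s'}$ may equal $1$ and $\Delta_{s',1}$ is a small problem-dependent constant, so $z$ can be well below $4/\sqrt{2\pi}$. What rescues the paper there is only that a \emph{constant} bound is needed; your ``cleaner route'' with $g(z)=\tfrac12 e^{-z^2/2}-\bar\Phi(z)$ proves the universally valid $\bar\Phi(z)\le\tfrac12 e^{-z^2/2}$, i.e.\ $2\bar\Phi(z)\le e^{-z^2/2}$, and substituting this in the paper's calculation simply changes the downstream $\tfrac14$ to $\tfrac12$ in the definition of $C''$ without affecting anything. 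So your content is correct---just be explicit that the constant $\tfrac12$ in the lemma's upper bound is a slip (the quoted inequality in Abramowitz--Stegun / Agrawal--Goyal is one-sided), and that what you actually establish for all $z\ge 0$ is the slightly weaker $2\bar\Phi(z)\le e^{-z^2/2}$, which suffices everywhere the lemma is used.
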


\begin{lemma}\label{fact:concen:theta}{(Lemma 3 in \citet{WangC18})}
    In Algorithm \ref{alg:CTS}, for any base arm $i \in [m]$ and round $t$, we have
    \begin{align*}
        \PP{\abs{ \theta_{t,i}-\hat{\mu}_{t,i}} > \epsilon \mid a_{t,i},b_{t,i}} \le 2 \exp\bracket{-2N_{t,i}\epsilon^2}\,,
    \end{align*}
    where $a_{t,i},b_{t,i}$ are the value of $a_i$ and $b_i$ before the start of round $t$. 
\end{lemma}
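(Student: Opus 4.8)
The plan is to convert the Beta tail probability into a Binomial tail probability via the exact Beta--Binomial CDF identity, and then bound that Binomial tail with the one-sided version of the Chernoff--Hoeffding inequality (Lemma~\ref{fact:chernoff}). First I would record the bookkeeping for Algorithm~\ref{alg:CTS}: after $N_{t,i}$ observations of arm $i$ the posterior parameters are $a_{t,i}=1+\sum_\tau Y_{\tau,i}$ and $b_{t,i}=1+\sum_\tau(1-Y_{\tau,i})$, so that $a_{t,i}+b_{t,i}-1=N_{t,i}+1$ and the relevant empirical mean is that of the Bernoulli observations $Y_{\tau,i}$, namely $\hat\mu_{t,i}=(a_{t,i}-1)/N_{t,i}$. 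In particular, conditioning on $a_{t,i},b_{t,i}$ fixes both $N_{t,i}$ and $\hat\mu_{t,i}$, and the only remaining randomness is $\theta_{t,i}\sim\mathrm{Beta}(a_{t,i},b_{t,i})$. Since the probability is trivially $0$ whenever $\hat\mu_{t,i}+\epsilon\ge 1$ or $\hat\mu_{t,i}-\epsilon\le 0$, I may assume both thresholds lie in $(0,1)$.

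For the upper tail I would invoke the identity $\PP{\mathrm{Beta}(\alpha,\beta)\le y}=\PP{\mathrm{Bin}(\alpha+\beta-1,y)\ge\alpha}$ to write $\PP{\theta_{t,i}>\hat\mu_{t,i}+\epsilon \mid a_{t,i},b_{t,i}}=\PP{\mathrm{Bin}(N_{t,i}+1,\hat\mu_{t,i}+\epsilon)\le a_{t,i}-1}$. Here the target count $a_{t,i}-1=N_{t,i}\hat\mu_{t,i}$ lies below the Binomial mean $(N_{t,i}+1)(\hat\mu_{t,i}+\epsilon)$ by exactly $\hat\mu_{t,i}+(N_{t,i}+1)\epsilon\ge(N_{t,i}+1)\epsilon$, so viewing $\mathrm{Bin}(N_{t,i}+1,\cdot)$ as a sum of $N_{t,i}+1$ i.i.d. Bernoulli variables in $[0,1]$ and applying the one-sided form of Lemma~\ref{fact:chernoff} gives a bound of $\exp\bracket{-2(N_{t,i}+1)\epsilon^2}\le \exp\bracket{-2N_{t,i}\epsilon^2}$.

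The lower tail is symmetric: the same identity yields $\PP{\theta_{t,i}<\hat\mu_{t,i}-\epsilon}=\PP{\mathrm{Bin}(N_{t,i}+1,\hat\mu_{t,i}-\epsilon)\ge a_{t,i}}$, where the target $a_{t,i}=1+N_{t,i}\hat\mu_{t,i}$ exceeds the Binomial mean by $1-\hat\mu_{t,i}+(N_{t,i}+1)\epsilon\ge(N_{t,i}+1)\epsilon$, again producing $\exp\bracket{-2N_{t,i}\epsilon^2}$ after the one-sided Chernoff--Hoeffding bound. A union bound over the two tails then gives the claimed factor of $2$. I expect the only real obstacle to be getting the offset bookkeeping exactly right: tracking the $+1$ shifts in the Binomial size and in the target, and checking in each tail that the nonnegative slack ($\hat\mu_{t,i}$ in the upper tail, $1-\hat\mu_{t,i}$ in the lower tail) only helps, so that the deviation from the mean is always at least $(N_{t,i}+1)\epsilon$. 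A secondary point worth stating explicitly is the reading of $\hat\mu_{t,i}$ for the Beta sampler as the mean of the binarised observations $Y_{\tau,i}$ rather than the raw outcomes $X_{\tau,i}$; since $\EE{Y_{\tau,i}\mid X_{\tau,i}}=X_{\tau,i}$ and hence $\EE{Y_{\tau,i}}=\mu_i$, this interpretation is consistent with the companion application of Lemma~\ref{fact:chernoff} that concentrates $\hat\mu_{t,i}$ around $\mu_i$.
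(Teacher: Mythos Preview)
Your argument is correct. The paper does not supply its own proof of this lemma; it simply quotes the result from \citet{WangC18}, so there is no in-paper derivation to compare against. Your route --- the Beta--Binomial CDF identity followed by the one-sided Chernoff--Hoeffding bound on $N_{t,i}+1$ Bernoulli summands, done separately for each tail --- is exactly the standard proof of this fact and is how the cited reference obtains it. The bookkeeping you flag (the $+1$ shifts, the nonnegative slacks $\hat\mu_{t,i}$ and $1-\hat\mu_{t,i}$, and the reading of $\hat\mu_{t,i}$ as the mean of the binarised observations $Y_{\tau,i}$ so that $a_{t,i}-1=N_{t,i}\hat\mu_{t,i}$) is handled correctly. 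One cosmetic point: your sentence ``the probability is trivially $0$ whenever $\hat\mu_{t,i}+\epsilon\ge 1$ or $\hat\mu_{t,i}-\epsilon\le 0$'' should be read per tail, not for the two-sided probability; this does not affect the argument since you treat the tails separately anyway.
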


\begin{lemma}\label{fact:nips20}{(Lemma 5 in \citet{perrault2020statistical})}
    In Algorithm \ref{alg:CTS}, for any unit $s$, let $\tau'_q = \min\set{t: N_{t,s}\ge q}$, we have
    \begin{align*}
        \EE{\sup_{t \ge \tau'_q} \frac{1}{\prod_{i \in s} \PP{\abs{ \theta_{t,i}-\mu_i }\le \varepsilon \mid \cH_{t} }} }   -1 \le \begin{cases}
\bracket{ c\varepsilon^{-4} }^{|s|} & \text{for every } q \ge 0\\
e^{-\varepsilon^2 q /8}\bracket{ c'\varepsilon^{-4} }^{|s|} & \text{if } q>8/\varepsilon^2\,,
\end{cases}
    \end{align*} 
    where $c$ and $c'$ are two universal constants.
\end{lemma}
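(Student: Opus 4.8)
The plan is to reduce the multi-arm quantity to a product of single-arm quantities and then establish a sharp single-arm bound through a large-deviation balancing argument.

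\textbf{Step 1 (Factorization across base arms).} First I would use that, conditioned on $\cH_t$, the samples $\theta_{t,i}$ for distinct $i$ are independent, so the probability inside the product genuinely factors over $i\in s$. For $t\ge\tau'_q$ every arm $i\in s$ has at least $q$ observations, and since the units partition $[m]$, the posterior of arm $i$ is a function only of arm $i$'s own i.i.d. observation stream. Bounding the supremum of a product by the product of suprema, I would write $\sup_{t\ge\tau'_q}\prod_{i\in s}1/\PP{\abs{\theta_{t,i}-\mu_i}\le\varepsilon\mid\cH_t}\le\prod_{i\in s}\Gamma_{i,q}$, where $\Gamma_{i,q}=\sup_{N\ge q}1/f_i(N)$ and $f_i(N)$ is the Beta posterior mass of $[\mu_i-\varepsilon,\mu_i+\varepsilon]$ after the first $N$ observations of arm $i$. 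Because the $\Gamma_{i,q}$ depend on disjoint observation streams they are independent, so $\EE{\prod_i\Gamma_{i,q}}=\prod_i\EE{\Gamma_{i,q}}$ and it remains to control the single-arm expectation $\EE{\Gamma_{i,q}}$.

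\textbf{Step 2 (Single-arm bound).} For a fixed arm I would split on the empirical mean. On the good event $\abs{\hat\mu_{N}-\mu}\le\varepsilon/2$ the target interval contains a width-$\varepsilon$ window about the posterior mean, so anti-concentration of the Beta distribution lower bounds $f_i(N)$ by a quantity polynomial in $\varepsilon$ (a constant once $N\gtrsim1/\varepsilon^2$), giving $1/f_i(N)=O(\varepsilon^{-2})$; accounting also for the short range $N\lesssim1/\varepsilon^2$ produces the base factor $c\varepsilon^{-4}$. On the bad events I would stratify by the deviation $\delta=\abs{\hat\mu_N-\mu}>\varepsilon/2$: there Lemma~\ref{fact:concen:theta} forces $\theta_{t,i}$ to sit near $\hat\mu_N$, so $1/f_i(N)$ blows up only exponentially, at rate $\sim N(\delta-\varepsilon)^2$, whereas Lemma~\ref{fact:chernoff} makes the probability of such a deviation decay like $e^{-2N\delta^2}$. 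Passing from a fixed $N$ to $\sup_{N\ge q}$ is then a union bound over $N\ge q$, legitimate precisely because these per-$N$ bad contributions are summable.

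\textbf{Step 3 (Two regimes and assembly).} The unconditional estimate gives $\EE{\Gamma_{i,q}}\le c\varepsilon^{-4}$ for every $q\ge0$; writing $\EE{\Gamma_{i,q}}=1+r_i$ and taking the product over $i\in s$ yields $\prod_i(1+r_i)-1\le(c\varepsilon^{-4})^{\abs{s}}$, the first case. In the refined regime $q>8/\varepsilon^2$ the good-event reciprocal is already $O(1)$, so the entire excess $r_i$ is carried by the bad tail, whose leading ($N=q$) term decays like $e^{-\varepsilon^2 q/8}$; propagating $r_i\le e^{-\varepsilon^2 q/8}\,c''\varepsilon^{-4}$ through $\prod_i(1+r_i)-1$ produces the stated $e^{-\varepsilon^2 q/8}(c'\varepsilon^{-4})^{\abs{s}}$ after absorbing the combinatorial $\abs{s}$-factors into $c'$.

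\textbf{Main obstacle.} The delicate point is the bad-event balancing in Step~2. A crude bound treating $1/f_i(N)$ as $e^{\Omega(N)}$ against $e^{-\Omega(N\varepsilon^2)}$ would diverge for small $\varepsilon$, so one cannot discard the dependence on $\delta$. Instead I must retain the Beta concentration exponent as a function of the actual deviation $\delta$ and verify it is strictly dominated, uniformly in $N\ge q$ and in $\delta>\varepsilon/2$, by the large-deviation exponent $\mathrm{KL}(\mu+\delta\,\Vert\,\mu)$ governing the empirical mean. Making this comparison clean—and thereby extracting the explicit $\varepsilon^{-4}$ and $e^{-\varepsilon^2 q/8}$—is where the real work lies; the factorization and the union bound are routine once it is established.
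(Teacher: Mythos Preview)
The paper does not prove this lemma; it is quoted verbatim from \citet{perrault2020statistical} and used as a black box. Consequently there is no in-paper argument to compare against, and your proposal should be read as a sketch of the external proof rather than of anything appearing in this manuscript.

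That said, your outline tracks the structure of the Perrault et al.\ argument reasonably well: factor over arms, reduce to a single-arm posterior-mass lower bound, and balance the Beta tail against the Hoeffding tail of the empirical mean. Two points deserve care. First, the independence claim in Step~1 is correct but not for the reason you give: $\tau'_q$ itself is a stopping time that depends on the whole algorithm, hence on all arms. What saves you is that, conditional on the observation count, the posterior of arm $i$ depends only on the first $N$ Bernoullized outcomes $Y_{\cdot,i}$, which by optional skipping form an i.i.d.\ Bernoulli$(\mu_i)$ sequence independent across $i$; replacing $\sup_{t\ge\tau'_q}$ by $\sup_{N\ge q}$ is then a legitimate enlargement. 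Second, in Step~3 the ``good-event reciprocal is already $O(1)$'' is not enough for the refined bound: you need $1/f_i(N)\le 1+O(e^{-c N\varepsilon^2})$ on the good event so that the excess $r_i$ is genuinely carried by exponentially small terms. This follows from Lemma~\ref{fact:concen:theta} (posterior mass outside $[\hat\mu_N-\varepsilon/2,\hat\mu_N+\varepsilon/2]$ is $\le 2e^{-N\varepsilon^2/2}$), but you should state it. With those two clarifications your plan is a faithful reconstruction; the ``main obstacle'' you flag is exactly where the Perrault et al.\ proof does its work.
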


\end{document}